\documentclass[11pt]{article}
\usepackage[margin=1in]{geometry}


\usepackage{hyperref}
\usepackage{url}


\usepackage{graphicx} 

\usepackage{fullpage}

\usepackage{amsmath}
\usepackage{amssymb}
\usepackage{mathtools}
\usepackage{amsthm}
\usepackage{balance}
\usepackage{comment}
\usepackage{algorithm}
\usepackage{algorithmicx}
\theoremstyle{definition}
\usepackage{subcaption}

\usepackage{booktabs} 
\usepackage{subcaption}
\usepackage{multirow}
\usepackage{comment}

\newtheorem{theorem}{Theorem}[section]
\newtheorem{lemma}{Lemma}[section]

\newtheorem{corollary}{Corollary}[section]
\newtheorem{definition}{Definition}[section]

\newtheorem{proposition}{Proposition}[section]
\newtheorem{remark}{Remark}[section]

\usepackage{algpseudocode}

\usepackage{todonotes}
\newcommand{\mytodo}[1]{\ifnum\Comments=1{#1}\fi}

\newcommand{\ignore}[1]{{}}

\usepackage[capitalize,noabbrev,nameinlink]{cleveref}

\newcommand{\R}{\mathbb{R}}
\newcommand{\E}{\mathbb{E}}

\newcommand{\matX}{\mathbf{X}}



\title{Simple Mechanisms for Representing, Indexing and Manipulating Concepts}


\author{
\begin{minipage}[t]{0.33\linewidth}
    \centering
    Yuanzhi Li \\ 
    Carnegie Mellon University \\
    \texttt{yuanzhil@andrew.cmu.edu}
  \end{minipage}%
  \begin{minipage}[t]{0.33\linewidth}
    \centering
    Raghu Meka \\ 
    Google Research \\
    \texttt{raghum@cs.ucla.edu}
  \end{minipage} \vspace{5mm} \\ 
  \begin{minipage}[t]{0.33\linewidth}
    \centering
    Rina Panigrahy \\ 
    Google Research \\
    \texttt{rinap@google.com}
  \end{minipage}%
\begin{minipage}[t]{0.33\linewidth}
    \centering
    Kulin Shah \thanks{Work done during an internship at Google Research. } \\ 
    UT Austin \\
    \texttt{kulinshah@utexas.edu}
  \end{minipage}%
}

%


\definecolor{myred}{RGB}{219, 50, 54}

\hypersetup{
    colorlinks=true,
	citecolor=blue,
	linkcolor=myred
	}

\begin{document}

\maketitle

\begin{abstract}
Supervised and unsupervised learning using deep neural networks typically aims to exploit the underlying structure in the training data; this structure is often explained using a latent generative process that produces the data, and the generative process is often hierarchical, involving latent concepts. Despite the significant work on understanding the learning of the latent structure and underlying concepts using theory and experiments, a framework that mathematically captures the definition of a concept and provides ways to operate on concepts is missing. In this work, we propose to characterize a simple primitive concept by the zero set of a collection of polynomials and use moment statistics of the data to uniquely represent the concepts; we show how this view can be used to obtain a signature of the concept. These signatures can be used to discover a common structure across the set of concepts and could recursively produce the signature of higher-level concepts from the signatures of lower-level concepts. To utilize such desired properties, we propose a method by keeping a dictionary of concepts and show that the proposed method can learn different types of hierarchical structures of the data. 
\end{abstract}

\section{Introduction}
Learning methods implicitly try to exploit the underlying structure of the data to make predictions using either supervised or unsupervised training  \cite{bengio2009learning,Goodfellow-et-al-2016}. The underlying structure is often believed to have arisen based on some latent generative process that must have produced the data, which in turn may involve latent concepts. For example, latent concepts in an image may be objects such as dogs, human faces, etc; in an abstract sense, these concepts may typically be hierarchical with advanced concepts built on top of the simpler concepts such as eyes, nose or even simple geometric shapes such as lines, circles, and rectangles. 
It has been shown that when deep networks are trained on the labeled data, they learn the hierarchical concept structure that is useful to solve a particular task \cite{allen2020backward,chen2020towards,zeiler2014visualizing,qi2017pointnet++}. A similar phenomenon has also been observed when the networks are trained using unsupervised or self-supervised training \cite{chen2020simple,jing2020self,Manning2020EmergentLS}. Even though a long line of work on understanding the learning of concept structure using experiments \cite{zeiler2014visualizing,qi2017pointnet++,Manning2020EmergentLS} and theory \cite{allen2020backward,chen2022towards}, a framework that mathematically characterizes what a concept is, and how they can be represented and discovered, is missing. Additionally, such methods are empirical and without any theoretical guarantees. Therefore, developing tools and frameworks to better contextualize, understand, and manipulate learned concepts is an important area of research. Especially as models become more and more complex, having the right types of abstractions will become critical for better understanding and developing new architectures. In this work, we take a step towards such a formal understanding of the notion of {\em concepts}. 

At the simplest level, a concept may be modeled using a manifold. For example, the concept of any shape (e.g., the concept of a circle) in high dimensions lies in a low-dimensional manifold. While there has been a long line of work on learning manifolds from the data distribution~\cite{izenman2012introduction,caterini2021rectangular,bashiri2018multi,pedronette2018unsupervised,wang2018flexible,lin2008riemannian,brehmer2020flows,brosch2013manifold,han2022enhance,cayton2008algorithms,ma2011manifold,zhu2018image,lunga2013manifold,talwalkar2008large}, this line of work fails to answer the question of learning a hierarchical structure of concepts of the data.

In this work, we start with primitive concepts (such as simple geometric shapes), defined by a manifold that can be specified using the zero sets of polynomial equations (for example a circle is defined by its equation; such manifolds are also called algebraic manifold in the literature\footnote{We want to point out that the precise mathematical definition of the algebraic manifold in their full generality is quite technical. Therefore, we provide a definition that will suffice for this work and we call it a well-behaved algebraic manifold. Even though for brevity we write algebraic manifold at some places, note that we always mean the well-behaved algebraic manifold.}). We show that the null space of the moments of the distribution on such a manifold can identify the manifold. In other words, these elementary statistics can serve as a \emph{signature} of the concept. Importantly, we show how these ideas can be generalized to higher level concepts that build upon primitive concepts; further we show how the idea of using null spaces can recursively produce the signature of the higher-level concepts from the signatures of the lower-level concepts, and naturally gives us an algorithm to discover structure across the set of concepts.

Although our definition of a concept is entirely mathematical, we show how this mathematical definition at an intuitive level maps the process of finding a concept to the transformer architecture~\cite{vaswani2017attention};  we will argue that the attention mechanism naturally groups inputs corresponding the same concept and the MLP layers can compute the moment statistics and the corresponding null space leading to the signature of the manifolds, thus matching our proposed definition. This view of how concept signatures are discovered using a transformer architecture also suggests that it would be advantageous to augment  each layer with a concept memory table that holds concept signatures where tables at higher layers hold concepts that build upon simple concepts in lower layers, thus separating concept disccovery from concept storage.

While there have been many interesting experimental works showing that the feedforward networks encode concept information \cite{geva-etal-2022-transformer}, visualizing the feed-forward network as a key-value memory unit \cite{geva-etal-2021-transformer} or keeping the explicit memory component to store the knowledge \cite{lewis2020retrieval,wu2022memorizing}, they fail to explain how the concepts are \emph{discovered} and how the lower level concepts are combined to obtain the higher-level concepts. 

\subsection{Our contribution}

Our main results can be summarized as follows:

 \begin{itemize}
 
     \item \textbf{Null space signature and membership inference for a concept}. We provide a simple method to compute concept signatures when the concept is given by a low-dimensional manifold. The signature is obtained by computing the null space of the moment statistics. Additionally, one can check if a point lies on the manifold by just checking the inner product between the signature and a non-linear transformation of the point (\cref{thm:nullspacepolyeqns}). For constant degree manifolds of constant dimensions the signatures are of constant size independent of the ambient dimension (\cref{thm:jl-manifold-lemma}). And similar concepts will have similar signatures (\cref{lemma:similar-manifolds})

     \item \textbf{Higher level concepts from lower level concepts}. We show that the null space signature can be used to obtain signature of higher (more abstract) level concepts from lower level concepts. For example, the signature of individual concepts can be used to obtain the signature of the intersection of two concepts  (\cref{thm:structuresigs}). In \cref{thm:dictionarysigs}, we show that if a collection of higher level concepts can be obtained from sparse unions of atomic concepts then the signatures of the atomic concepts can be obtained from the signatures of the higher level concepts. We provide additional such examples in \cref{appendix-new-sec:examples}.

     \item \textbf{Connections to transformer architecture}. 
     There are simple instantiations of a transformer network with mostly \textit{random} sets of parameters and a few learned projections that can compute concept signatures of latent manifolds present in the input data (see \cref{sec:transformer}). Our view also motivates an enhanced architecture that attaches a table of concept signature at each layer where concepts at higher layers build upon simpler concepts in lower layers -- this type of architecture separates concept {\em discovery} (done by attention, MLP layers) from concept {\em storage} (stored in the concept table).

     \item \textbf{Experiments.} We validate our hypothesis on learning higher level concepts by lower level concepts using the null space signature by experiments on synthetic data in \cref{appendix:experiments}. 
 \end{itemize}

Our analysis points towards an architecture that is very similar to the transformer but in addition, also has a dictionary of concept signatures at each layer that is produced over time in a system that is receiving a continuous stream of inputs. The set of signatures at each layer is looked up as new inputs arrive for the inference. This architecture is similar to the architectures for retrieval-based methods that keep an explicit memory unit to store the knowledge \cite{lewis2020retrieval,wu2022memorizing}. We consider that our work provides a theoretical justification that performing simple operations on the stored knowledge can improve the performance.

\section{Subspace signatures of the concepts}
\label{sec:concent-signature-intro}

As mentioned earlier, through out the paper, we will study manifolds, mostly well-behaved algebraic manifolds (defined in \cref{def:algebraic-manifold}), but will also have a few results for more general \emph{anlaytic manifolds}. For brevity, we will shorten well-behaved algebraic manifolds to \emph{algebraic manifolds} throughout (keeping in mind that our notion is more restricted than general manifolds). 

\paragraph{Notations.} We will use the following notational conventions: For two tensors $A, B$ of the same dimension, we use $\langle A, B \rangle \in \mathbb{R}$ to denote the dot product between $A$ and $B$. For any vector $x$, $x^{\otimes l}$ denotes the $l$'th tensor power of $x$ and $[1; x]$ denotes the concatenation of $1$ to $x$ vector. Let $\phi(x) : \mathbb{R}^d \to \mathbb{R}^m$ denote the feature mapping. Throughout this work, we will use polynomial features  $[1;x]^{\otimes l}$.

In this section, we define our main notion of a manifold signature. We start with a linear manifold in $d$-dimensional space.

\subsection{Warm up: a linear manifold in \texorpdfstring{$d$}{d}-dimensional space}

We start with the example of the points coming from a linear subspace of dimension $k$. For simplicity, we will focus on linear manifolds that pass through the origin. 

\begin{definition}[Linear $k$-dimensional manifold] 
\label{def:linear-manifold}
    Let $w_1, w_2, \ldots, w_{d-k}$ be linearly independent vectors in $d$-dimensional space that are orthogonal to the manifold. We say that a data distribution $\mathcal D$ lies in corresponding linear $k$-dimensional manifold $\mathcal M$ if for all $x$ in support of the distribution $\mathcal D$, $w_i^\top x = 0$ for all $i \in [d-k]$ \footnote{We also assume that $w_1, w_2, \ldots, w_{d-k}$ is a maximal linearly independent vector set for which $w_i^\top x = 0$ and there does not exist any other linearly independent vector $w_{d-k+1}$ such that $w_{d-k+1}^\top x = 0$ for all points $x$ in support of distribution $\mathcal D$. This condition intuitively ensures that distribution $\mathcal D$ spans the $k$-dimensional manifold and does not lie in a smaller than $k$-dimensional manifold.}. 
\end{definition}

Note that the assumption of independence of $w_1, w_2, \ldots, w_{d-k}$ vectors is without loss of generalization because if the set of vectors is dependent then one can covert into the set of independent vectors by Principal Component Analysis (PCA). To understand the definition, one can consider the distribution on a hyperplane, i.e., for all $x$ in the support of the distribution $\mathcal D$, $w^\top x = 0$.  As the support of such a distribution can be at most $d-1$ dimensional, we call it a $(d-1)-$dimensional manifold.

\begin{definition}[Null space signature for a linear manifold]
\label{def:signature-linear-manifold}
Let $X$ be a random variable corresponding to a distribution, $\mathcal D$, that lies in a linear $k$-dimensional manifold. We denote the second moment of $X$ by $M(X)$ (i.e., $M(X) = \E_{X \sim \mathcal D}[ X X^\top ]$). Let $U \Sigma U^\top$ denote the eigendecomposition of $M(X)$ where $\Sigma \in \R^{d \times d}$ is a diagonal matrix with the decreasing order of eigenvalues ($\Sigma_{1, 1} \geq \Sigma_{2, 2} \ldots \geq \Sigma_{k, k}$ and $\Sigma_{i, i} = 0$ for all $i \in \{ k+1, \ldots, d \}$). We define a null-space signature of the $k$-dimensional linear manifold as $T(X) = U_{k+1:d} U_{k+1:d}^{\top}$ where $U_{(k+1):d} \in \R^{d \times (d-k)}$ use $(k+1)^{th}$ to $d^{th}$ eigenvectors (eigenvectors corresponding to zero eigenvalues).
\end{definition}

Note that the expectation in \cref{def:signature-linear-manifold} can be replaced by an empirical estimate of sufficiently many samples. Through out the paper, we will present all our results with the expectation for simplicity however, they can be generalized to the setting when we only have empirical estimates and are included in the appendix.

We first show that the null space signature defined in \cref{def:signature-linear-manifold} can be used to identify points on the manifold.

\begin{proposition}
\label{prop:linear-manifold}
    The null space signature $T(X)$ of a $k$-dimensional linear manifold $\mathcal M$ (defined in \cref{def:linear-manifold}) uniquely identifies the manifold of the distribution $\mathcal D$. That is, for any point $x$ on the manifold $\mathcal M$, $\langle T(X), x x^\top \rangle = 0$ and for any point $x$ not on the manifold $\mathcal M$, $\langle T(X), x x^\top \rangle > 0$ and equals the distance of $x$ from the manifold.
    
\end{proposition}

The proof of \cref{prop:linear-manifold} is provided in \cref{appendix-new-subsec:linear-manifold-proofs}.

\textit{Proof idea.} To explain the main idea behind the proof of the above proposition, we start with an example of a $d-1$-dimensional manifold $\mathcal M$. That is, points $x$ are coming from a distribution $\mathcal D$ on $\mathcal M$ such that for all $x$ in the support of $\mathcal D$, $w_1^\top x = 0$. In other words, this means that the manifold $\mathcal M$ is orthogonal to the vector $w_1$. Note that column space of $M(X)$ captures the subspace of distribution $\mathcal D$ (because $M(X) = \int p(x) xx^\top dx$ is an integral over psd matrices $xx^\top$) and therefore, $w_1$ will be the only direction along which $M(X)$ will have zero eigenvalue because of zero eigenvalue corresponding to $w_1$ for all $x x^\top$ for any $x \in \text{supp}(x)$. Therefore, in this case the null space signature will become $T(X) = w_1 w_1^\top$. Additionally, for any point $x$ on the manifold $\mathcal M$, $\langle T(X), x x^\top \rangle = (w_1^\top x)^2 = 0$ and for any point $x$ not on $\mathcal M$ will have non-zero component in the direction of $w_1$ therefore, $\langle T(X), x x^\top \rangle = (w_1^\top x)^2 > 0$.

To generalize the above idea for a $k$-dimensional manifold, we observe that $M(X)$ will have zero eigenvalues not only in the direction of $w_1$ but for $d-k$ eigenvectors in the subspace formed by $w_1, w_2, \ldots, w_{d-k}$ vectors because column space of $x x^\top$ lies in the orthogonal of the subspace of $w_1, w_2, \ldots, w_{d-k}$. Therefore, $T(X)$ will capture the subspace of $w_1, \ldots, w_{d-k}$ and similarly to the $(d-1)$-dimensional manifold case, the membership check will follow.

On an intuitive level, the null space signature $T(X)$ captures the subspace of all orthogonal directions to the manifold $\mathcal M$. Additionally, it \textit{does not depend on the distribution $\mathcal D$ on the manifold $\mathcal M$.}

\subsection{Extending to well-behaved algebraic manifold}
\label{subsec:well-behaved-manifold}

In this section, we extend our main notion of a \emph{signature} for a set of points that is meant to capture the notion of concepts. We then show certain properties like membership inference of the signature when the underlying points lie on a \emph{well-behaved algebraic manifold} (see definition below). 

\begin{definition}[Well-behaved Algebraic Manifold]
\label{def:algebraic-manifold}
Let $\phi(x) : \R^d \to \R^{m}$ be a feature mapping from $x$ to all monomials of degree $\ell$ of $x$ (hence, $m = O(d^{\ell})$). Let $w_1, w_2, \ldots, w_{d-k}$ be linearly independent vectors in $m$-dimensional space orthogonal to the manifold. We say a distribution lies in $k-$dimensional well-behaved manifold $\mathcal M$ if for all $x$ in support of $\mathcal D$, $w_i^\top \phi(x) = 0$ for all $i \in [d-k]$. Additionally, the degree of the feature mapping $\ell$ is called the degree of manifold $\mathcal M$.

\end{definition}

Throughout the paper, the coefficient vectors $w_i$ that represent the polynomials $P_i(x) = w_i^\top \phi(x)$ have been orthonormalized. Let $P(x)$ denote the vector $(P_1(x), P_2(x), \ldots, P_{d-k}(x))$.

Note that the only difference in the definition of a well-behaved algebraic manifold from the linear manifold in \cref{def:linear-manifold} is that the former one is defined on the polynomial feature mapping $\phi(x)$. Similar to \cref{def:signature-linear-manifold}, we now define the signature for the algebraic manifold.

\begin{definition}\label{def:signature-polynomial-manifold}
    Let $X$ be a random variable corresponding to distribution $\mathcal D$ that lies in a $k$-dimensional well-behaved algebraic manifold. Let $M(X)$ be the second moment of the feature mapping $\phi(x)$ with eigendecomposition $M(X) = \E_{x \sim \mathcal D}[ \phi(x) \phi(x)^\top ] = U \Sigma U^\top$ where $\Sigma \in R^{m \times m}$ is a diagonal matrix with decreasing order of eigenvalues $(\Sigma_{1, 1} \geq \Sigma_{2, 2} \ldots \geq \Sigma_{k', k'} > 0 \text{ and } \Sigma_{k' + 1, k' + 1} = \ldots = \Sigma_{m, m} = 0$ for some $k'$). A null-space signature of the manifold is defined as $T(X) = U_{k' +1:m} U_{k' +1:m}^{\top}$ where $U_{k' +1:m} \in \R^{m \times (m-k')}$ using $(k' + 1)^{th}$ to $m^{th}$ eigenvectors (eigenvectors corresponding to zero eigenvalues).
\end{definition}

The above definition also generalizes \cref{def:signature-linear-manifold} to incorporate the algebraic manifold by considering the moment of the feature mapping and \cref{def:signature-polynomial-manifold} reduces to \cref{def:signature-linear-manifold} when we consider the feature mapping $\phi(x)$ to be the identity function. 

We next show that, under suitable technical but generic non-degeneracy conditions, the signature of an algebraic manifold can uniquely identify it. The signature can also provide a way to verify membership in the manifold by computing a suitable inner product. 

\begin{proposition}\label{thm:nullspacepolyeqns}[Null space signature for a well-behaved algebraic manifold]
Let $X$ be a random variable corresponding to distribution $\mathcal D$ that lies in a $k$-dimensional well-behaved algebraic manifold of degree $\ell$. Under suitable non-degeneracy conditions\footnote{We will say that a distribution $\mathcal D$ on the manifold $\mathcal M$ is non-degenerate if for any subset of features in  $\phi(x)$ that are linearly independent over the entire manifold are also linearly independent over the support of the distribution. For example, suppose $\phi$ is the polynomial features mapping,  and the manifold is analytic. In that case, a distribution supported on a ball on the manifold is non-degenerate (this is because if a polynomial is identically zero within a ball then it is identically zero over the entire analytic manifold).} on $\mathcal D$, if we compute the signature $T(X)$ with $\phi$ being the degree $\ell$-polynomial feature mapping (i.e., $\phi(x)$ contains all monomials of $x$ of up to degree $\ell$), then the signature $T(X)$ (\cref{def:signature-polynomial-manifold}) uniquely identifies the manifold $\mathcal{M}$. That is, a point $x \in \mathcal{M}$ if and only if $\langle \phi(x) \phi(x)^\top, T(X) \rangle = 0$, and when $x \notin \mathcal M$, we have $\langle \phi(x) \phi(x)^\top, T(X) \rangle > \| P(x) \|^2$. Under suitable assumption \footnote{The assumption is that $\| \nabla P(x) \| > c$ for some constant $c$ (in a certain vicinity of interest around  the manifold.)}, this is $\Omega(d(x, \mathcal M)^2)$ where $d(x, \mathcal M) = \min_{y \in \mathcal M} \| x - y \|_2$.

\end{proposition}

The proof of \cref{thm:nullspacepolyeqns} is given in \cref{appendix-new-subsec:algebraic-manifold-proofs}.

\subsection{Extending to a generative representation of manifolds}

In \cref{subsec:well-behaved-manifold}, we considered the manifold $\mathcal M$ given by a set of polynomial equations (see \cref{def:algebraic-manifold}) and considered any distribution on such a manifold $\mathcal M$. Another way to represent a high-dimensional distribution that lies in a low-dimensional manifold $\mathcal M$ is by a push-forward function of a low-dimensional distribution. We call such a representation of the distribution as a generative representation and it is formally defined as follows:

\begin{definition}[Generative representation of $k$-dimensional manifold]
\label{def:generative-manifold}
Given a $k$-dimensional manifold in $\R^d$, we call $G:\R^k \rightarrow \R^d$ a generative representation (if it exists) of $\mathcal{M}$ if the following holds: for all $x \in \R^d$, $x \in \mathcal{M}$ if and only if there exists $z \in \R^k$ such that $x = G(z)$. Additionally, the distribution of $Z \in \R^k$ defines the distribution $\mathcal D$ of $X$ on the manifold $\mathcal M$.

\end{definition} 

Now, we show that when the function $G$ is a polynomial of degree $r$ then we can convert the generative representation of the manifold (\cref{def:generative-manifold}) to the well-behaved algebraic manifold (\cref{def:algebraic-manifold}) given by the set of polynomial equations.
This can be viewed as a form of Effective Nullstellensatz~\cite{effectivenullstellen}.
\begin{theorem}[Polynomial generative representation to algebraic manifold]
\label{thm:implicit-polynomial-manifold-degree}
    Let $X \in \mathbb{R}^{d}$ be a random variable on a $k-$dimensional manifold $\mathcal M$ that has a degree $r$ polynomial generative representation ($X = G(Z)$ where $Z \in \mathbb{R}^k$ with $G$ being a degree $r$ polynomial). Then, the manifold can be written as zero sets of $d-k$ polynomials and the degree of each polynomial is at most $(cr)^k$ for some constant $c$. That is, there exists $w_1, w_2, \ldots, w_{d-k}$ vectors such that for every $x \in \mathcal M$, $\langle w_i, \phi(x) \rangle = 0$ for all $i \in [d-k]$ for polynomial feature mapping $\phi$ of degree $(cr)^k$ for some constant $c$.
\end{theorem}

The proof makes use of a simple form of elimination~\cite{cox1997ideals} by raising the generator equation $X = G(z)$ to a sufficiently high tensor power $u$ to obtain $X^{\otimes u} = G(z)^{\otimes u}$. Note that then there will be more different monomials in $X^{\otimes u}$ than $G(z)^{\otimes u}$ because $X$ has more variables than $z$. Therefore, one can eliminate the implicit variables $z$ to obtain a polynomial representation corresponding to the manifold. See \cref{appendix-new-subsec:generative-manifold-proofs} for the complete proof.   

Next, we show that even if $G$ is not a polynomial but a general analytic function, the manifold can be approximated as $d-k$ polynomials of high enough degree. 

\begin{theorem}
\label{thm:analytic-implicit-degree}
    Let $X \in [-1, 1]^{d}$ be a random variable on a $k-$dimensional manifold $\mathcal M$ and has an analytic generative representation ($X = G(Z)$ where $G$ is analytic function and $Z \in [-1, 1]^k$) such that $\| \nabla^{(m)} G(z) \|_2 \leq 1$ for any $m$ and for any $z \in [-1 , 1]^k$. Then, the manifold can be approximately written as zero sets of $d-k$ polynomials. That is, there exists $w_1, w_2, \ldots, w_{d-k}$ vectors such that for every $x \in \mathcal M$, $| \langle w_i, \phi(x) \rangle | \leq \varepsilon$ for all $i \in [d-k]$ for polynomial feature mapping $\phi$ of degree at least $c^k \log (1 / \varepsilon) $ for some sufficiently large constant $c$. 
\end{theorem}

The proof of \cref{thm:analytic-implicit-degree} combines the result from \cref{thm:implicit-polynomial-manifold-degree} to the (Taylor) polynomial approximation of the function $G$. See \cref{appendix-new-subsec:generative-manifold-proofs} for more details.

\paragraph{Reducing the signature size using random projections.} Observe that if $\phi$ is a polynomial feature mapping of degree $\ell$, then size of $\phi$ is $O(d^\ell)$. Therefore, the null space signature $T(X)$ resulted by using the \cref{def:signature-polynomial-manifold} will at least of size $O(d^\ell)$. However, in the next result, we show that random projection can reduce the signature size to be independent of the dimension when the manifold is given by the generative representation. 

\begin{theorem}\label{thm:jl-manifold-lemma}[$k$-dim manifold generated by degree $r$ polynomials]
A $k$-dim manifold with generative representation $G$ that are polynomials of degree $r$, can be represented by a signature $T$ of size $r^{\tilde{O}(k^2)}$ (independent of ambient dimension $d$). Such a signature can be computed by projecting the points to a lower dimensional space and computing the signature as before. This signature can be used to test the membership of a point $x$ with high probability by checking the value of $\langle \phi(x) \phi(x)^T, T \rangle$: $\langle \phi(x) \phi(x)^T, T \rangle=0$ if $x$ is on the manifold. {\bf Far points: } for any point (not on the manifold), $\langle \phi(z) \phi(z)^T, T \rangle \ge \| P(x) \|^2 $ same as \cref{thm:nullspacepolyeqns}.

\end{theorem}

The proof of the above theorem can be found in \cref{appendix-new-subsec:generative-manifold-proofs}. 

We only prove the result for the generative representation of the manifold because of some technical conditions however, we believe that it might be true for well-behaved manifolds as well but we leave it as a future direction.

\section{Learning architecture augmented with a table of concept signatures}
\label{sec:concept-signature}

In \cref{sec:concent-signature-intro}, we showed that the null space signature uniquely identifies the manifold $\mathcal M$. One can consider that a manifold corresponds to a concept (for example, the manifold of a circle, square, etc.).  In this section, we propose an algorithm and architecture that computes the null space signature of the concepts in a hierarchical manner. 

Our proposed architecture is composed of mainly two modules - a concept discovery module and a concept storage module. The concept discovery module mainly computes the null space signature on the current input whereas the concept storage module stores a dictionary of non-linear transformations of \textit{latest} $m$ of signatures $T_{\ell, 1}, T_{\ell, 2} \ldots, $ at each layer $\ell$ based on recent past inputs (\cref{figure2}). This concept storage module can be thought of as a queue data structure of the last $m$ signatures. The attention at layer $\ell$ in our concept discovery module attends to the stored concept at layer $\ell$ and combines top $K$ similar concepts stored at layer $\ell$ to obtain the concept at layer $\ell+1$. Here, the value of $K$ is a hyperparameter of the algorithm. Using this idea, we propose the following learning method (with notation: $T_{\ell, i}$ being null space signature, $S_{\ell, i}$ being non-linear transformation of $T_{\ell, i}$ except at the first layer).

\begin{enumerate}
\item Given an input vector $x_t$ at time $t$. The concept storage module at the first layer keeps track of the feature mappings of the latest $m$ input vectors (i.e. $S_{1, i} = \phi_1( x_{t + 1 - i} ) \phi_1( x_{t + 1 - i} )^\top$, $i=1,..,m$).
\item The first step in computing the signature for the layer $\ell + 1$ given signatures at layer $\ell$ is to compute the attention of the latest $S_{\ell, 1}$ to the stored signatures $\{ S_{\ell, i} \}_{i=1}^m$ using cosine-based similarity (defined below) and obtain the $S_{\ell, i_1},  S_{\ell, i_2}, \cdots, S_{\ell, i_K}$ with the highest $K$ attention score.
\item In the second step of computing the signature for the layer $\ell + 1$, we compute the null space of $\frac{1}{K}\sum_{j=1}^K S_{\ell, i_j}$ (average of the signatures with the highest $K$ attention). We call the null space $T_{\ell+1, 1}$.
\item In the third step, we apply the feature mapping $\phi_{\ell + 1}$ on the null space $T_{\ell+1, 1}$ computed in the previous step, set it as $S_{\ell+1, 1}$ (i.e., $S_{\ell+1, 1} = \phi_{\ell + 1}(T_{\ell+1}$)) and store it in the concept storage module. Recall that the concept storage module behaves like a queue data structure. Hence, it will store $S_{\ell+1, 1}$ and remove the oldest signature at layer $\ell + 1$ (signature corresponding to smallest $t$).

\label{item:architecture}
\end{enumerate}

\begin{figure*}[ht]

\centering
\includegraphics[scale=0.31]{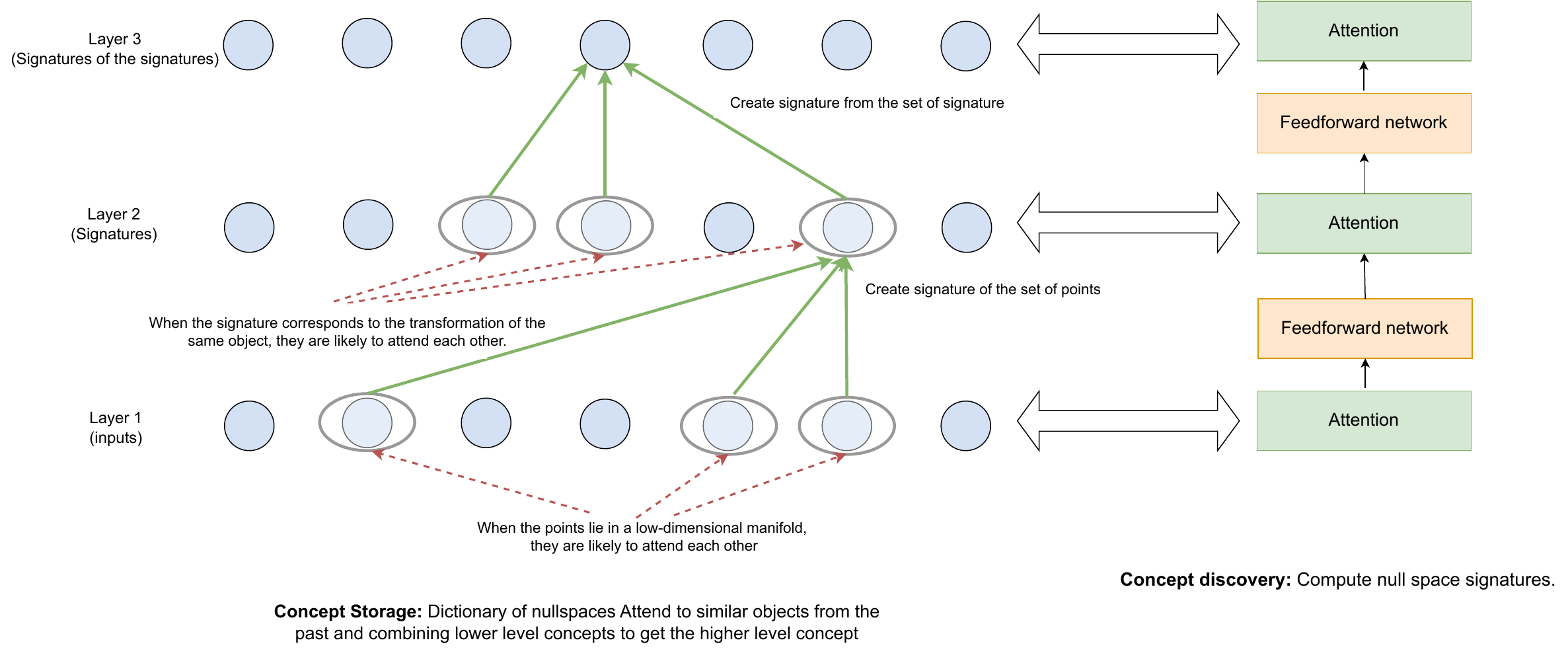}
\caption{Overview of our learning architecture.}
\label{figure2}
\end{figure*}

The above process keeps repeating $L$ times. Now, we define the cosine-based similarity and show that if the $k$-dimensional manifold has a small distortion then points on the manifold are likely to have higher cosine similarity.

\paragraph{Cosine similarity based attention.} In the above algorithm, we use attention based on the cosine similarity ($\text{attention between $x$ and $y$} = \frac{\langle x, y \rangle}{\| x\|_2 \|y\|_2}$). In the following result \cref{thm:attentiondistance}, we prove that the cosine similarity between points from a low-dimensional manifold is likely to be high.  

\begin{proposition}\label{thm:attentiondistance}[Attention based on cosine similarity]
Given a $k$ dimensional manifold $X$ with constant distortion, for every $\varepsilon > 0$, $N \in \mathbb{Z}^+$, for every set of $\Omega( N(\log(k)/\varepsilon)^k)$ points in set $\{x \mid \|x\|_2 \leq 1 \}$, there must be a set $S$ of $N$ points such that for every $x, x' \in S$, $ \frac{\langle x, x' \rangle}{\| x \|_2 \|x' \|_2} \geq 1- \varepsilon$.     
\end{proposition}

The above result (~\cref{thm:attentiondistance}) shows that points lying in a low dimensional manifold are likely to have a very high cosine similarity. Therefore, if there is a set of $K-1$ points $x_{i_1}, x_{i_2}, \cdots, x_{i_{K-1}}$ and $x_t$ that lie in the same manifold, then the null space of $\frac{1}{K}\sum_{j=1}^K S_{1, i_j}$ is more likely to be the signature of that manifold. Similarly, at layer $\ell$, $T_{\ell +1, 1}$ will more likely be a signature for the low dimension manifold that is spanned by a subset of $T_{\ell, i_1},  T_{\ell, i_2}, \cdots, T_{\ell, i_K}$. Thus, if there is a subset of signatures in $\{T_{\ell, 2}, T_{\ell, 3}, ..., T_{\ell, m} \}$ that lies on a low dimension manifold that contains $T_{\ell, 1}$, then  $T_{\ell +1, 1}$ is more likely to be the signature of that manifold.

\section{Extracting structure across the concepts}
In \cref{sec:concept-signature}, we proposed an algorithm using the null space signature and showed how cosine similarity-based attention can be used to group points together when the points lie in a low-dimensional manifold. In this section, we show how the signature of similar concepts can be combined to obtain the structure across the concepts. 

We first show that the cosine similarity of the signatures of similar concepts is high therefore they can be grouped together. Formally, we show the following result.

\begin{lemma}[Similar Manifolds] \label{lemma:similar-manifolds}
Let $\mathcal D_1$ and $\mathcal D_2$ with random variables $X_1$ and $X_2$ be two distributions on $(d-1)$-dimensional well-behaved algebraic manifold $\mathcal M_1$ and $\mathcal M_2$ of degree $\ell$ such that for any point $x$ in the support of $\mathcal D_1$, $\langle w_1, \phi(x) \rangle = 0$ (similarly for any $x$ in support of $\mathcal D_2$, $\langle w_2, \phi(x) \rangle = 0$). Then, the inner product between their null space signature $\langle T(X_1), T(X_2) \rangle = \langle w_1, w_2 \rangle^2$. 

\end{lemma}

To provide an intuitive understanding, consider the case of the linear manifolds. Recall that the null space signature $T$ of a linear manifold captures the subspace of all orthogonal directions to the manifold. Therefore, the inner product between null space signatures of two concepts would be high if the subspace of orthogonal directions has a high inner product and hence, cosine similarity-based attention can group them together. 

The proof of the above result is provided in \cref{appendix-new-sec:proofs-extracting-structure-concepts}. Additionally in the appendix, we also prove that the dot product $\langle T(X_1), T(X_2) \rangle$ increases as the dimension of $U_1 \cap U_2$ increases for linear manifolds $U_1$ and $U_2$.

We next show that some simple operations on the signature of the concepts can extract useful structures across concepts. 

\subsection{Intersections of concepts.} 

We start by showing how the signature of the intersection of two concepts can be learned just from the signatures of the underlying concepts. A point is in the intersection of two concepts/manifolds if and only if that point is in both of the manifolds. Our result in this case is as follows.  

\begin{proposition}\label{thm:structuresigs}

Given two concepts with manifolds $U_1$ and $U_2$, the intersection of the two concepts is given by the intersection of the manifolds $U_1 \cap U_2$. The signature of this intersection can be computed as follows. Then, 
$$T(U_1 \cap U_2) = \text{Nullspace}( T(U_1) + T(U_2) )$$

\end{proposition}

\subsection{Concepts that are union of a few simpler concepts}

In this section, we consider the setting where there are $m$ concepts $U_1, U_2, \ldots, U_m$ but each concept $U_i$ is constructed by taking the union of a constant number of concepts from the set of $n$ atomic concepts $A_1, A_2, \ldots, A_n$. To be precise, the null space of concept $U_i$ is obtained by taking the intersection of null space of some of the atomic concepts; note that for linear manifolds this is equivalent to taking the manifold spanning the two union of the two manifolds. In this case, we show that given only access to the null space signature of $U_1, U_2, \ldots, U_m$, we can recover the null space signature of $A_1, A_2, \ldots, A_n$. Our result is as follows.

\begin{lemma}\label{thm:dictionarysigs}[Informal]
Let $m$ be sufficiently larger than $n$ and let $U_1, U_2, \ldots, U_m$ satisfy some non-degeneracy condition, then given access to only $T(U_1), T(U_2), \ldots, T(U_m)$, there exist an operation on $T(U_1), T(U_2), \ldots, T(U_m)$ that recovers the null space signatures of the atomic concepts (i.e., it recovers $T(A_1), T(A_2), \ldots, T(A_n))$.
\end{lemma}

The proof of the above result is provided in \cref{appendix-new-sec:proofs-atomic-concepts} and it uses \cref{thm:structuresigs} to separate the atomic concepts $A_1, A_2, \ldots, A_n$ by taking intersections of $U_1, U_2, \ldots, U_m$.

\subsection{Example of signature of a circle and general concept of circles}

We provide several examples in \cref{appendix-new-sec:examples} to illustrate the potential of using the null space signature to obtain desired properties such as higher-level concepts from lower-level concepts and obtaining rotation or translational-invariant signatures, etc. To be more precise, we provide an example in \ref{appendix-new-sec:examples} of obtaining the signature of a circle and extend it to show that one can get the signature of the general concept of concentric circles by combining several signatures of circles of different radii. We also show how we can obtain rotational and translation invariant signatures of a manifold in the Appendix. Assuming two manifolds go through some motions then we show how we can obtain the common motion pattern in the Appendix.

\section{Connection to Transformer architecture}
\label{sec:transformer}

Our learning algorithm in \cref{sec:concept-signature} uses the attention mechanism that groups points from the same concept together, MLP layer that computes the signature of the concept and a memory table that keeps track of moment signature seen so far. Even though we presented the signatures as matrices, we will argue how this signatures will arise from flattened vectors from random MLP layers with small amount of trained projection. 

On a high level, we envision that a feedforward network can contain the signature $S(x) = \phi(x) \phi(x)^\top$ for each token and attention can be used to combine tokens from the same concept together to obtain $M(X) = \frac{1}{|X|} \sum_{x \in X} S(x)$ and the subsequent feedforward network can be used to obtain $T(X)$ using $M(X)$ because $T(X)$ is the null space of $M(X)$ and it can be obtained by taking by repeated multiplication of $(I-M(X))$. See \cref{fig:transformer-connection} for the high-level overview of the connection and see \cref{appendix-new-sec:proofs-connections-transformers} for the detailed discussion. 

Precisely, 
\begin{itemize}
    \item Given an input $x$, when passed through a randomly initialzed feedforward layer represents (by applying an appropriate projection) the flattened view of the signature $S(x) = \phi(x) \phi(x)^\top$ (Note that neural networks are known to perform kernel transformation \cite{allen2018learning,allen2018convergence,jacot2018neural})
    \item Next, an attention layer with identity key-query-value matrix (under appropriate scaling) performs the aggregation $$M(X) = \frac{1}{|X|} \sum_{x \in X} S(x).$$  
    \item Finally, additional feedforward layers would easily perform the power iteration on $I-M(X)$ to give (the flattened view of) the null space signature $T(X)$.
\end{itemize}

\section{Experiments}
\label{appendix:experiments}

In this section, we show the effectiveness of keeping the concept storage unit to represent and manipulate the concepts at different layers using synthetic data. 


\begin{table*}[ht]
    \centering
    \begin{tabular}{c c c c} 
     \toprule
     \textbf{Dataset} \;\;\; & \textbf{Layer} & \textbf{Percentage} (\%)  &  \textbf{Percentage} (\%) \\ [0.5ex] 
     & & (Signature storage=4000) & (signature storage=5000)  \\
     \midrule
     4-circles & \multirow{3}{*}{Layer 2} & 98.8 & 99.0  \\ [0.5ex]
     6-circles &  & 96.2 & 98.0 \\[0.5ex]
     8-circles &  & 96.0 & 97.6  \\ [0.5ex]
     \midrule
     4-circles+4-parabolas & \multirow{3}{*}{Layer 2} & 97.9 & 98.3 \\ [0.5ex]
     6-circles+6-parabolas & & 97.2 & 98.9 \\ [0.5ex]
     8-circles+8-parabolas & & 96.9 & 97.2 \\
     \midrule
     4-circles+4-parabolas & \multirow{3}{*}{Layer 3}& 99.5 & 99.7 \\ [0.5ex]
     6-circles+6-parabolas & & 99.4 & 99.3 \\ [0.5ex]
     8-circles+8-parabolas & & 98.9 & 98.7 \\
     \bottomrule
    \end{tabular}
    \caption{Percentage of signatures from the same concept in the top $K$ signatures with high attention.}
    \label{tab:exp}
\end{table*}

\paragraph{Experimental setup.} We experiment using two synthetic datasets -- 1) $n$-circles and 2) $n$-circles+$n$-parabolas. The $n$-circles dataset contains $n$ concentric circles of radius $\{1, 2, \ldots, n\}$. The $n$-parabolas dataset contains $n$ parabolas with the same axis of symmetry but the shift along the axis is $\{0.5, 1.5, \ldots, n - 0.5 \}$. The $n$-circles+$n$-parabolas contain $2n$ shapes ($n$ from each of the circle and parabola) and the number of points from each shape is equally divided. In our synthetic dataset, a circle with a given radius (similarly parabola with a given shift) denotes a lower-level concept and all circles with different radius (or parabolas with different shifts) are part of a higher-level concept of the circle shape (or the parabola shape). Note that we chose the shifts of the parabola such that there is an intersection/overlap between the circles and the parabola. See \cref{fig:toy-dataset} for the example of the 4-circles + 4-parabolas dataset.

As the novelty of our work from the architecture perspective mainly lies in manipulating concepts from the concept discovery module, we focus on experimenting with the concept discovery module with the predefined features $\phi( \cdot )$. For all our experiments, $\phi(x)$ contains all monomials up to degree 2. Then, we pass $\phi(x)$ as an input to our learning architecture mentioned in \cref{sec:concept-signature}. We use cosine similarity-based attention on the signatures to combine signatures from the same concept and obtain a higher-level signature. 

In experiments of $n$-circles dataset, we use concept storage unit with 2 layers and in experiments of $n$-circles+$n$-parabolas dataset, we use the concept storage unit with 3 layers. In our experiments, we use $K=5$. We vary the signature storage size in $\{4 \times 10^3 , 4.5 \times 10^3, 5 \times 10^3, 5.5 \times 10^3 \}$ and the number of concept $n$ in $\{4,6,8\}$. We used 100 test samples to calculate the metrics. The experimental results are averaged over 5 independent iterations.

\begin{figure}
    \begin{subfigure}[t]{0.5\textwidth}
        \centering
        \includegraphics[width=0.56\linewidth]{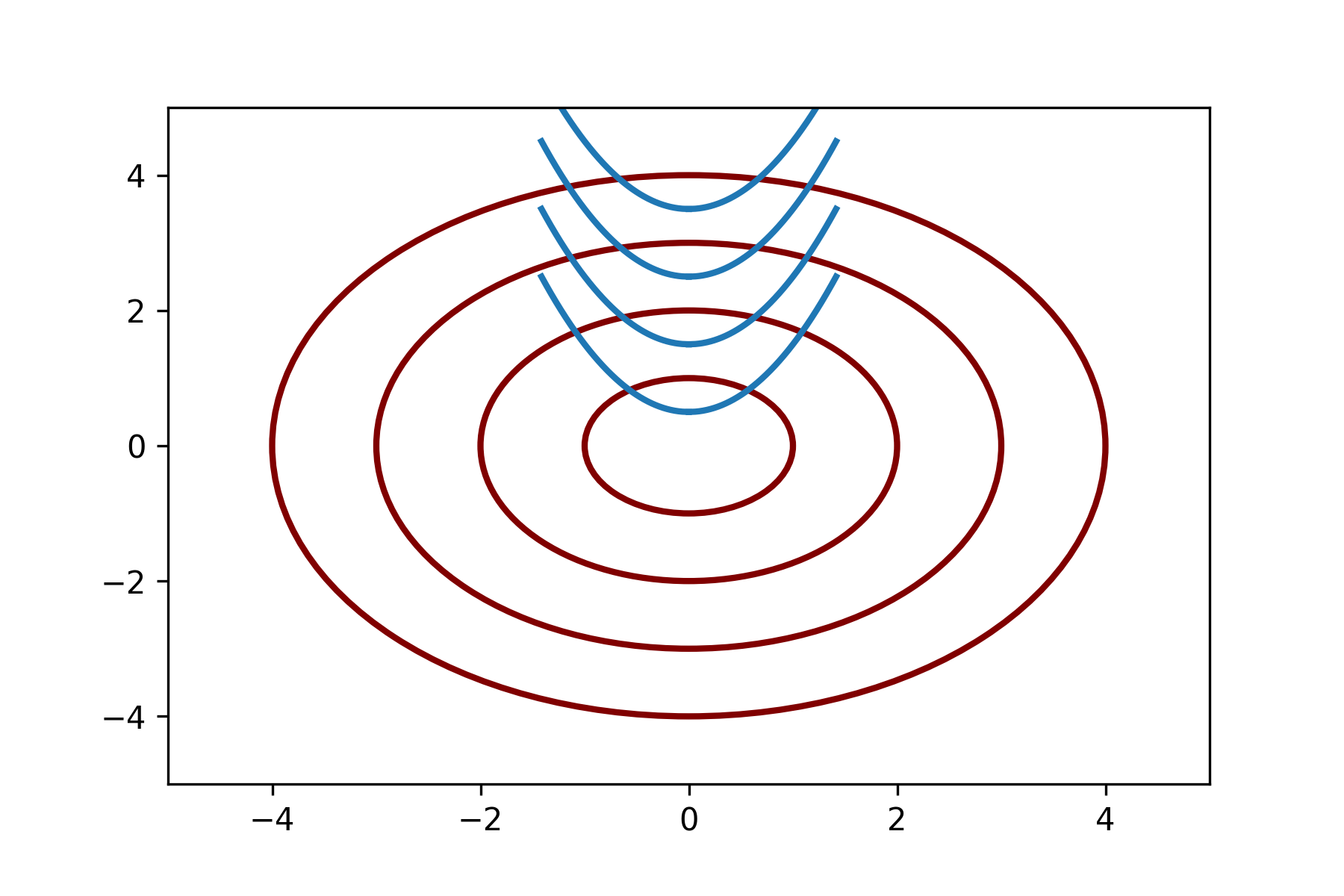}
        \caption{4-circles + 4-parabolas dataset. Each circle corresponds to a lower-level concept. All four circles are part of a general higher-level concept of circles (similarly for parabola).}
        \label{fig:toy-dataset}
    \end{subfigure} \hspace{5mm}
    \begin{subfigure}[t]{0.5\textwidth}
        \centering
        \includegraphics[width=0.56\linewidth]{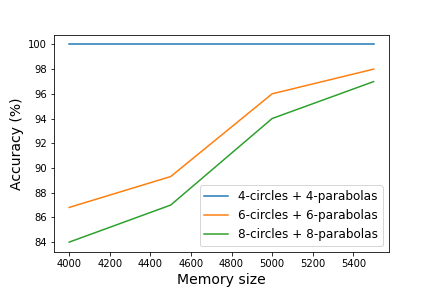}
        
        \caption{We plot the percentage of top-$K$ signatures with the highest attention which corresponds to the signature of a general circle for a new unseen circle signature.}
        \label{fig:acc_general_signature}
    \end{subfigure}
    \caption{ 
    Synthetic dataset of type $n$-circles+$n$-parabola and experimental findings on it.}
    \label{figure-experiments}
\end{figure}

\paragraph{Questions and evaluation.} We perform experiments to answer the following three questions: 
\begin{enumerate}
    \item To obtain a higher-level concept signature, do signatures only attend to the signatures from the same concept? To answer this question, we measure the percentage of signatures from the same concept in the top $K$ signatures for each data point at all layers in the concept discovery unit where $K$ is the number of signatures to attend to and we present experimental results in \cref{tab:exp}.  
    \item How does the performance change as we increase the number of concepts? To answer this question, we measure the percentage of the same concept signatures during the attention by increasing $n$ in both synthetic data. We present the findings of our experiments in \cref{tab:exp}. \label{item:exp2}
    \item Can a higher-level signature of a concept generalize to unseen concepts from the same lower-level concepts? To be specific, in the $n$-circles + $n$-parabolas dataset, the general signature of a circle is created using signatures of circles with radius $\{1,2,\ldots,n\}$. Now, we ask the following question: does the signature of a circle with a radius outside of the training radius set have high attention to the general signature of a circle than the general signature of the parabola? To answer this question, we randomly sample a radius from $[1, 10]$ and create a signature of the circle with this radius. Then, we find the top-$K$ signatures with the highest attention in layer 3 and calculate the percentage of them that correspond to the signature of a general circle and report it in \cref{fig:acc_general_signature}.
    \label{item:exp3}
\end{enumerate}

\paragraph{Results.} In \cref{tab:exp}, we see that our proposed architecture only attends to the signatures from the same concept class not only in the second layer but also in the third layer. This shows that signatures of circles with different radius only attends to each other and creates a general signature of the circles with different radius (See \cref{figure2} for details about the architecture). We also see that increasing the number of concepts by increasing $n$ only mildly degrades the performance. \cref{fig:acc_general_signature} shows that having a modular architecture for each concept helps in associating a new unseen lower-level concept to the correct higher-level concept and quickly learning about the new concept.

\newpage

\bibliographystyle{alpha}
\bibliography{main}

\newpage

\appendix

\section{Subspace signature of concepts}
\label{appendix-new-sec:subspace}

In this section, we provide the proofs for \cref{sec:concent-signature-intro}. We start with the proof of signature for the linear manifold case. 

\subsection{Linear manifold}
\label{appendix-new-subsec:linear-manifold-proofs}

We restate the proposition of the linear manifold case as below. 

\begin{proposition}
\label{prop:linear-manifold-appendix}
    The null space signature $T(X)$ of a $k$-dimensional linear manifold $\mathcal M$ (defined in \cref{def:linear-manifold}) uniquely identifies the manifold of the distribution $\mathcal D$. That is, for any point $x$ on the manifold $\mathcal M$, $\langle T(X), x x^\top \rangle = 0$ and for any point $x$ not on the manifold $\mathcal M$, $\langle T(X), x x^\top \rangle > 0$ and equals the distance of $x$ from the manifold.
\end{proposition}

\begin{proof}
    As explained in the proof sketch, $T(X)$ is a null space of the moment matrix and therefore only contains orthogonal directions of the manifold. This gives us that for all $k' > k$, $U_{k'}^\top x = 0$. Hence, if $x \in \mathcal M$, $\langle T(x), x x^\top \rangle = \langle U_{k+1:d} U_{k+1:d}^\top, x x^\top \rangle = 0$. If $x \notin \mathcal M$, then it means that $x$ has some component in one of the orthogonal directions of the manifold. This implies that there exists $k' > k$ such that $U_{k'}^\top x > 0$ which implies that $\langle T(x), x x^\top \rangle = \langle U_{k+1:d} U_{k+1:d}^\top, \; x x^\top \rangle > 0$.
\end{proof}

\subsection{Well-behaved algebraic manifold}
\label{appendix-new-subsec:algebraic-manifold-proofs}

Now, we will provide the proof for the signature of the well-behaved algebraic manifold.

\begin{proposition}
    Let $X$ be a random variable corresponding to distribution $\mathcal D$ that lies in a $k$-dimensional well-behaved algebraic manifold of degree $\ell$. Under suitable non-degeneracy conditions\footnote{We will say that a distribution $\mathcal D$ on the manifold $\mathcal M$ is non-degenerate if for any subset of features in  $\phi(x)$ that are linearly independent over the entire manifold are also linearly independent over the support of the distribution. For example, suppose $\phi$ is the polynomial features mapping,  and the manifold is analytic. In that case, a distribution supported on a ball on the manifold is non-degenerate (this is because if a polynomial is identically zero within a ball then it is identically zero over the entire analytic manifold).} on $\mathcal D$, if we compute the signature $T(X)$ with $\phi$ being the degree $\ell$-polynomial feature mapping (i.e., $\phi(x)$ contains all monomials of $x$ of up to degree $\ell$), then the signature $T(X)$ (\cref{def:signature-polynomial-manifold}) uniquely identifies the manifold $\mathcal{M}$. That is, a point $x \in \mathcal{M}$ if and only if $\langle \phi(x) \phi(x)^\top, T(X) \rangle = 0$, otherwise $\langle \phi(x) \phi(x)^\top, T(X) \rangle > \| P(x) \|^2$. Under assumption that $\| \nabla  P(x)  \| > c$ for some constant $c$, 
     $\| P(x) \|^2 \geq \Omega(d(x, \mathcal M)^2)$ . 
\end{proposition}

\begin{proof}
Let $p:\R^d \rightarrow \R$ be a polynomial that vanishes on the manifold $\mathcal{M}$. Then, for any point $x$, $p(x) = \langle w, \phi(x) \rangle$ (here we view $w$ as the coefficient vector of the polynomial $p(x)$). Thus, if $p(x) = 0$ for all $x \in \mathcal{M}$, then $w$ is in the null-space of $M(X)$ as $w^T \phi(x_i) \phi(x_i)^T w = 0$ for all $x_i$.

Additionally, for some $x$, if there exists $w$ such that $\langle w, \phi(x) \rangle > 0$. That implies that all the polynomial factors of the polynomial $\langle w, \phi(x) \rangle$ are non-zero. However, if this happens then it means that one of the polynomials (of $\langle w_i, \phi(x) \rangle$) that define the manifold is non-zero. Therefore, it means that $x$ does not lie on the manifold $\mathcal M$. Further, note that the coefficient vectors $w_1, \ldots, w_{d-k}$ form an orthonormal basis of a subspace that lies within the null space. This implies that $\langle \phi(x) \phi(x)^\top , T(X) \rangle \geq \langle  \phi(x) \phi(x)^\top , WW^\top \rangle = \sum_{i=1}^{d-k} P_i(x)^2 = \| P(x) \|^2$ where $i$th column of $W$ is $w_i$.  Further, if $\| \nabla P(x) \|$ is lower bounded, then look at the path of length say $l$ of steepest gradient flow from $x$ to the manifold; $|P(x)| \ge cl$. But $l$ can only exceed  the shortest path from $x$ to the manifold. Thus $\| P(x) \| \geq c d(x, \mathcal M)$.

To generalize the result for the null space signature when the moment $M(X)$ is computed, first observe that as long as the distribution is bounded or sub-exponential, then computing $M(X)$ with $\text{poly}(d/\epsilon)$ samples will result in $\epsilon$ close approximate to the moment matrix $M(X)$ with high probability. Therefore, one can generalize the definition of the null space $T(X)$ to contain eigenvectors whose eigenvalue is smaller than $\epsilon$ instead of eigenvectors with zero eigenvalues. In this case, taking $\text{poly}(d/\epsilon)$ samples and assuming $\mathcal{M}$ is non-degenerate, one can get the null space signature $T(X)$ from finite samples with the same guarantees. 

\end{proof}

\subsection{Generative representations}
\label{appendix-new-subsec:generative-manifold-proofs}

In this section, we present our results when the manifold is given by the generative representation \cref{def:generative-manifold}. We first prove that when the generative representation is a polynomial and the manifold is a $k$-dimensional manifold in $k+1$ dimensional space, then we can convert the generative representation of the manifold to the well-behaved algebraic manifold representation given in \cref{def:algebraic-manifold}. 

\subsubsection{Polynomial generative representation of the manifold}
\label{appendix-subsec:explicit-poly-repr-implicit-poly}

We will first prove the case when $X$ is a $k-$dimensional manifold in $(k+1)-$dimensional space.

\begin{lemma}
\label{lemma:implicit-poly-explicit-degree} 
    Let $X \in \mathbb{R}^{k+1}$ be a random variable in a $k-$dimensional manifold $\mathcal M$ that has a degree $r$ polynomial generative presentation ($X = G(Z)$ where $Z \in \mathbb{R}^k$ with $G$ being a degree $r$ polynomial). Then, the manifold can be written as zero sets of a polynomial $H(X)$ and the degree of the polynomial is at most $r^k$. That is, there exists a $w_1$ vector such that for every $x \in \mathcal M$, $\langle w_1, \phi(x) \rangle = 0$ for polynomial feature mapping $\phi$ of degree $(cr)^k$ for some constant $c$.
\end{lemma}

\begin{proof}[Proof of \cref{lemma:implicit-poly-explicit-degree}]
    Let $u$ be the upper bound on the degree of $H(X)$. We show that there exists a polynomial constructed from the entries of $X^{\otimes u}$ such that $H(X) = 0$ for points on the manifold.

    We have $X^{\otimes u} = G(z)^{\otimes u}$ by taking $u^{\text{th}}$ tensor power of $X = G(z)$ equality. Denote the number of different monomial entries in $X^{\otimes u}$ with $X \in \mathbb{R}^{k+1}$ by $m(u, k+1)$. Observe that $X^{\otimes u} = G(z)^{\otimes u}$ has $m(u, k+1)$ different equalities from because $X^{\otimes u}$ has $m(u, k+1)$ different monomials. The right side $G(z)^{\otimes u}$ has at most $m(u \cdot r, k)$ different monomial terms, and when $m(u, k+1) > m(u \cdot r, k)$, we can eliminate the implicit variables in the remaining $m(u, k+1) - m(u \cdot r, k)$ equations. The remaining equations are consistent because $X$ is generated using $G(z)$. 

    The degree of the polynomial $\phi(X)$ is the smallest $u$ such that $m(u, k+1) > m(u \cdot r, k)$. The number of monomials of degree $u$ in $k+1$ variables is given by $m(u, k+1) = C^{u+k}_{u}$. The degree of $\phi(X)$ is smallest $u$ such that $C^{u+k}_{k} > C^{r \cdot u + k - 1}_{k - 1}$. This inequality satisfies when $u > (cr)^{k-1}$ for some absolute constant $c$. 
    \begin{align*}
        C^{r \cdot u + k - 1}_{k - 1} &\leq \frac{ e^{k-1} (r \cdot u + k -1)^{k-1} }{ (k-1)^{k-1} } \leq \frac{ (2e)^{k-1} (r \cdot u)^{k-1} }{ (k-1)^{k-1}  } \\ 
        &\leq \Big( \frac{ 2e }{ c(k-1) } \Big)^{k-1} u^k \leq \Big( \frac{ 1 }{ k^k } \Big) (u + k)^k \leq C^{u+k}_k
    \end{align*}
    The second inequality follows from $k - 1 \leq r \cdot u$.
\end{proof}

\begin{proof}[Proof of \cref{thm:implicit-polynomial-manifold-degree}] For extending the \cref{lemma:implicit-poly-explicit-degree} to $X \in \mathbb{R}^d$ case, we start by identifying $k$ direction along a local neighbourhood of any point on the neighbourhood. Note that this can be done by looking at the tangent direction of the manifold of the points in the neighbourhood. After determining the tangent direction, we can rotate the points so that $\{ X_1, X_2, \ldots, X_k \}$ in the neighbourhood.

\noindent Assuming $\{ X_1, X_2, \ldots, X_k \}$ covers a $k-$dimensional space, we use \cref{lemma:implicit-poly-explicit-degree} for each of $\{X_{k+1}, \ldots, X_d \}$ as $k+1$-th coordinate and obtain $d-k$ polynomials of the degree $(cl)^k$ for some absolute constant $c$. Note that each of the polynomials contains a feature mapping $H(X)$ that depends on the first $k$ coordinates of $X$ and one of the last $d-k$ coordinates such that $H(X) = 0$ for points on the manifold. 
\end{proof}

Now, we will show that when the generative function $G$ of the manifold is an analytic function, then also we can approximately obtain the well-behaved algebraic manifold representation.  

\subsubsection{Analytic generative representation of the manifold}
\label{appendix-subsec:explicit-poly-repr-analytic}

We will first prove for the case when $X$ is a $k-$dimensional manifold in a $(k+1)-$dimensional space given by the analytic function.

\begin{lemma}
\label{lemma:analytic-implicit-degree-kp1}
    Suppose the data $X \in [-1, 1]^{k+1}$ lies in a $k-$dimensional manifold by equation $X = G(z)$ where $G : [-1, 1]^k \to [1, 1]^{k+1}$ is an analytic function with bounded gradient $\| \nabla^{(m)} G(x) \| \leq 1$. Then, for a sufficiently large constant $c$, there exists a polynomial in $X$, $H(X)$ of degree $c^k \log (1/\varepsilon)$ such that for points on the manifold, $| H(X) | \leq \varepsilon$ for any $\varepsilon > 0$. 
\end{lemma}

\begin{proof}[Proof of \cref{lemma:analytic-implicit-degree-kp1}]
    Since $G(z)$ is an analytic function with the bounded derivatives so we can approximate $G(z)$ using a degree $q$ Taylor approximation $G'(z)$ within the ball of a unit radius of $z$ with the approximation error $\frac{1}{q!}$. Now, instead of Taylor expanding each term in $G(z)$, we can expand each term in $G(z)^{\otimes \ell}$. Suppose for any multi-index $\alpha = (\alpha_1, \ldots, \alpha_\ell)$, the Taylor approximation of $G_{\alpha}(z) = \prod_{i=1}^\ell G_{\alpha_i} (z)$ of degree $q$ is given by $\Tilde{G}_{\alpha}(z)$ as follows:
    \begin{align*}
        \Tilde{G}_{\alpha}(z) = G_{\alpha}(\mathbf{0}) + \sum_{i=1}^{q-1} \frac{ \langle \nabla^{(i)} G_\alpha( \mathbf{0} ), z^{.i} \rangle }{i!}.
    \end{align*}
    where $\mathbf{0}$ represents the zero vector. Using the remainder theorem of the Taylor series, we have
    \begin{align*}
        | \Tilde{G}_{\alpha}(z) - G_{\alpha}(z) | \leq \frac{ \| \nabla^{(q)} G_\alpha(z') \| }{ q! } \leq \frac{\ell^q}{q!}.
    \end{align*}
    for some $z'$, where the last inequality follows from the fact that the $i^{\text{th}}$ derivative of $\Tilde{G}_{\alpha}(z)$ for any $\alpha$ can be written as a sum of $\ell^i$ terms with each term bounded by 1. Hence, we can approximate each term in $G(z)^{\otimes \ell}$ with $\Tilde{G}$ within the ball of the unit radius with the approximation error $\ell^q / q!$ where each element of $\Tilde{G}(z)^{\otimes u}$ is a degree $q-$polynomial obtained by Taylor expansion.\\[3pt] 
    Recall that the $m(\ell, k)$ is defined as the number of different monomials of degree $\ell$ in $k$ variables. The total number of variables in $\Tilde{G}(z)$ is given by $m(q, k)$ and the total number of different equations in $X^{\otimes u} = G(Z)^{\otimes u}$ is $m(u, k+1)$. The degree of polynomial $H(X)$ is the smallest $\ell$ such that $m(u, k+1) > m(q, k)$. This inequality reduces to $C^{u+k}_k > C^{q+k-1}_{k-1}$. Choosing $u = c^{k-1} q^{\frac{k-1}{k}}$ for some absolute constant $c$, then this inequality holds. 
    \begin{align*}
        C^{q+k-1}_{k-1} &\leq \Big( \frac{ e(q+k-1) }{(k-1)} \Big)^{k-1} \leq \frac{u^k}{k^k} \\ 
        &\leq \frac{ (u+k)^k}{ k^k } \leq C^{u+k}_k
    \end{align*}
    
    Next, we show that the polynomial is close to zero for the points on the manifold. The polynomial obtained by a Taylor expansion of the analytic function $G$ with the truncation at $q^{th}$ degree has approximation $\frac{u^q}{q!}$. Putting the value of $q = \frac{u^{k/(k-1)}}{c^k}$, we obtain the approximation error to be $\frac{u^q}{q!} \leq \frac{c^q u^q}{q^q} = c^q u^{-\frac{q}{k}} = u^{-\frac{ u^{(1 + 1/k) } }{2 c^k k}}$. Now, taking the degree $u \geq c^k \log \frac{1}{\varepsilon}$, we have
    \begin{align*}
        -\frac{ u^{(1 + 1/k) } }{2 c^k k} \log u \leq - \frac{ u }{2 c^k k} \leq \log \varepsilon.
    \end{align*}
    Using this inequality, we obtain the bound. 
\end{proof} 

\begin{proof}[Proof of \cref{thm:analytic-implicit-degree}]
    Similar to proof of \cref{thm:implicit-polynomial-manifold-degree}, we start by rotating the coordinates at each point such that the first $k$ coordinates $\{ X_1, X_2, \ldots, X_k \}$ in the neighbourhood covers $k$ dimensional space. Then, we use \cref{lemma:analytic-implicit-degree-kp1} for each of $\{X_{k+1}, \ldots, X_d \}$ as $k+1$-th coordinate and obtain $d-k$ polynomials that satisfies the given property. 
\end{proof}

\subsubsection{Reducing the signature size by random projections}
\label{appendix-subsec:random-projection}

We also show that we can reduce the sample complexity to $r^{\tilde{O}(k^2)}$ by using a random projection to a lower-dimensional space. The following follows easily from Theorem 1.6 in \cite{Clarkson08} (the statement there is about a $k$-dimensional manifold; we can view adding an extra point as a $(k+1)$-dimensional manifold).

\begin{lemma}[Reducing sample complexity using projection, \cite{Clarkson08}]\label{lemma:randproj}
Let $\mathcal{M} \subseteq \R^d$ be a  connected, compact, orientable, differentiable manifold M with dimension $k$, constant curvature and bounded diameter. Then, for any $\epsilon, \delta > 0$ and a point $x, \mathbb{R}^d$, if $A:\R^d \rightarrow \R^m$ is a random linear map for $m = O((k + \log(1/\delta))/\epsilon^2)$, with probability at least $1-\delta$, the following hold:
\begin{itemize}
    \item For all $y, z \in \mathcal{M}$, $\|Ay - z\|_2 = (1\pm \epsilon) \|y - z\|_2$.
    \item For all $y \in \mathcal{M}$, $\|Ax - Ay\| = (1 \pm \epsilon) \|x-y\|_2$.
\end{itemize}
\end{lemma}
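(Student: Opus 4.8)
The plan is to reduce both claims to the manifold Johnson–Lindenstrauss bound of \cite{Clarkson08} (their Theorem 1.6), which states that a random linear map $A:\R^d\to\R^m$ with $m = O((k_0+\log(1/\delta))/\epsilon^2)$ preserves, up to a $(1\pm\epsilon)$ factor and with probability at least $1-\delta$, all pairwise distances on a compact, connected, bounded-diameter, bounded-curvature $k_0$-dimensional manifold. Applied verbatim to $\mathcal{M}$ with $k_0=k$, this is exactly the first bullet. The only thing left is to bring the fixed external point $x$ under the same umbrella, since $x$ need not lie on $\mathcal{M}$.

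To do that I would fold $x$ into a slightly larger manifold. Fix any $p\in\mathcal{M}$ and set
$$\mathcal{M}' \;=\; \mathcal{M} + [0,1]\cdot(x-p) \;=\; \{\, y + t(x-p)\ :\ y\in\mathcal{M},\ t\in[0,1]\,\},$$
the image of $\mathcal{M}\times[0,1]$ under a fixed affine extrusion. This set is compact, it contains $\mathcal{M}$ (take $t=0$) and contains $x$ (take $y=p$, $t=1$), it has diameter at most $\operatorname{diam}(\mathcal{M})+\|x-p\| = O(1)$, it is $(k+1)$-dimensional, and being a ``prism'' over $\mathcal{M}$ along a fixed direction its principal curvatures are those of $\mathcal{M}$ together with a flat direction, so it meets Clarkson's hypotheses with dimension parameter $k+1$. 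Applying Theorem 1.6 of \cite{Clarkson08} to $\mathcal{M}'$ with $m = O((k+1+\log(1/\delta))/\epsilon^2) = O((k+\log(1/\delta))/\epsilon^2)$ then preserves all pairwise distances inside $\mathcal{M}'$; restricting to pairs in $\mathcal{M}$ recovers the first bullet and restricting to pairs $(x,y)$ with $y\in\mathcal{M}$ gives the second, so a single application yields both (or, keeping the two manifolds separate, one splits the failure probability as $\delta/2+\delta/2$). Note also the harmless typo fix: the first bullet should read $\|Ay-Az\|_2 = (1\pm\epsilon)\|y-z\|_2$.

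The one point requiring care — and the main obstacle — is verifying that the auxiliary set genuinely satisfies Clarkson's regularity conditions. The most tempting construction, the cone on $\mathcal{M}$ with apex $x$, fails because its reach degenerates at the apex; the prism above avoids the apex but can self-intersect in low codimension. I would handle this in whichever of the following ways is cleanest: (i) in the regime of interest here the ambient dimension $d$ is large, so a generic choice of the extrusion direction makes $\mathcal{M}'$ embedded with reach bounded below in terms of that of $\mathcal{M}$; (ii) alternatively, use the cone but excise an $\eta$-ball around $x$, apply the theorem to the resulting bounded-curvature $(k+1)$-manifold, and recover the estimate at $x$ by letting $\eta\to 0$ using continuity of $y\mapsto\|Ax-Ay\|$ and $y\mapsto\|x-y\|$ on the compact set $\mathcal{M}$; (iii) most robustly, observe that Clarkson's proof ultimately only uses that the $\epsilon'$-covering number of the set is $O((\operatorname{diam}/\epsilon')^{k+1})$ — which holds for $\mathcal{M}'$ regardless of self-intersections — combined with a standard net-plus-chaining argument (apply the finite Johnson–Lindenstrauss lemma to an $\epsilon\operatorname{diam}$-net together with $x$, then interpolate using that $A$ is $(1+\epsilon)$-Lipschitz and that nearby points of $\mathcal{M}'$ are joined by short low-curvature arcs), which extends the distortion bound from the net to all of $\mathcal{M}'$.
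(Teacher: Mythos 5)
Your proposal follows exactly the paper's approach: the paper does not give a formal proof environment for this lemma but instead states, in the sentence preceding it, that the result ``follows easily from Theorem 1.6 in \cite{Clarkson08} (the statement there is about a $k$-dimensional manifold; we can view adding an extra point as a $(k+1)$-dimensional manifold),'' which is precisely your reduction. Your prism construction and the discussion of why the auxiliary set meets Clarkson's regularity hypotheses (reach, self-intersection, covering-number fallback) simply make explicit the details the paper elides in that one sentence.
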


We use the projection lemma to obtain the following.

\begin{proof}
We first note that if there is a generative representation of a $k$-dimensional manifold by polynomials of degree $r$, then the degree of the manifold is at most $\ell = k^r$ - see \cref{thm:implicit-polynomial-manifold-degree}. Note that any manifold with generative representation of degree $r$ continues to have a generative representation of same degree after a projection. Thus, if we use a random projection to project the space to $m = O((k + \log(1/\delta))/\epsilon^2)$, then by \cref{lemma:randproj}, all distances on the manifold and the test point $x$ are preserved with probability at least $1-\delta$. 

Let us work in the projected space of dimension $m$. Now, the number of monomials involved in the monomial map is $\binom{m+\ell}{\ell}$ which is at most $(m+\ell)^m$. Since $\ell$ is at most $r^k$, the dimension of the monomial map is $r^{k m}$. Now, note that to approximate $T(X)$, we need a spectral approximation to $M(X)$. Recall that for any distribution on vectors in dimension $D$ with bounded covariance, the number of samples needed for the empirical covariance matrix to approximate the true covariance matrix is $\tilde{O}(D)$. By applying this to $\phi(x)$ (in the projected space), the number of samples needed for $M(X)$ to approximate the true moment matrix is $\tilde{O}(r^{km})$. Thus, the size of the signature is $\tilde{O}(r^{km}) = r^{\tilde{O}(k^2)}$. 

Finally, for the last part, consider the polynomial $Q(x) = \langle \phi(x) \phi(x)^T, T \rangle$ (in the projected space). Since $T$ is a projection matrix, it is not hard to argue that for $z$ uniformly random on the unit sphere,
$$E[Q(z)^2] = \Omega_r(1).$$

The lemma's final part follows from \cref{lemma:polydistance}. 
\end{proof}

We will use the following classical result about multi-variate polynomials:
\begin{lemma}[Anti-concentration for polynomials, \cite{CarberW01}]Let $P: \R^n \rightarrow \R$ be a degree at most $d$ polynomial. Let $x$ be a random point in the unit ball.  Let $\|P\|_2 = E_x[P(x)^2]^{1/2}$. Then, for any $t, \epsilon$, 
$$Pr_{x}[ |P(x) - t| < \epsilon \|P\|_2] = O(\epsilon^{1/d}).$$
\end{lemma}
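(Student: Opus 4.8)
The statement is the Carbery--Wright anti-concentration inequality, so the plan is to reproduce the structure of its standard proof. Replacing $P$ by $P-t$ (same degree) reduces us to bounding $\Pr_x[\,|P(x)|<\epsilon\|P\|_2\,]$, and by homogeneity we may take $\|P\|_2=E_x[P(x)^2]^{1/2}=1$, so the goal is $\Pr_x[\,|P(x)|<\epsilon\,]=O_d(\epsilon^{1/d})$. The proof has two ingredients: a one-dimensional estimate for univariate degree-$d$ polynomials, obtained from the Remez inequality, and a reduction of the $n$-dimensional statement to the one-dimensional one.

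\emph{Univariate estimate.} I would first show: if $q:\R\to\R$ has $\deg q\le d$ and $\frac{1}{|I|}\int_I q^2\ge 1$ on an interval $I$, then $\frac{1}{|I|}\,\big|\{s\in I:|q(s)|<\delta\}\big|=O(\delta^{1/d})$ with an absolute constant. After an affine change of variables take $I=[-1,1]$; since $\|q\|_{L^\infty[-1,1]}\ge(\tfrac12\int_{-1}^1 q^2)^{1/2}\ge 1$, the classical Remez inequality applied to the set $F=\{s:|q(s)|<\delta\}$ gives $1\le\|q\|_{L^\infty[-1,1]}\le \delta\,T_d\!\big(\tfrac{4-|F|}{|F|}\big)$, and $T_d(x)\le(2x)^d$ for $x\ge1$ then forces $|F|=O(\delta^{1/d})$ (the bound being vacuous unless $\delta$ is already small). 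Rescaling back, for $x_n$ uniform on a chord $[-\rho,\rho]$ one obtains $\Pr[\,|q(x_n)|<\delta\,]=O\big((\delta/\|q\|_{2,\rho})^{1/d}\big)$ where $\|q\|_{2,\rho}^2=\frac{1}{2\rho}\int_{-\rho}^{\rho}q^2$.

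\emph{Reduction to one dimension.} For $x$ uniform on the ball $B_n$, condition on $x'=(x_1,\dots,x_{n-1})$: the conditional law of $x_n$ is uniform on the chord $[-\rho(x'),\rho(x')]$ with $\rho(x')=\sqrt{1-|x'|^2}$, and the marginal law $\mu'$ of $x'$ is again log-concave. Writing $p_{x'}(s)=P(x',s)$ and $N(x')=\frac{1}{2\rho(x')}\int_{-\rho}^{\rho}P(x',s)^2\,ds$ (so that $E_{x'\sim\mu'}[N(x')]=E_x[P(x)^2]=1$), the univariate estimate gives
\[
\Pr_x[\,|P(x)|<\epsilon\,]=E_{x'}\Big[\Pr_{x_n\mid x'}\big[\,|p_{x'}(x_n)|<\epsilon\,\big]\Big]\ \le\ E_{x'}\Big[\min\big(1,\ C\,\epsilon^{1/d}\,N(x')^{-1/(2d)}\big)\Big].
\]
On $\{N(x')\ge \tfrac12\}$ the integrand is $O(\epsilon^{1/d})$, so everything comes down to controlling $\mu'(\{N(x')<\tfrac12\})$.

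\emph{Main obstacle.} This last point is the heart of the matter and the only genuinely nontrivial step: $N(x')$ is a nonnegative polynomial of degree $\le 2d$ in $x'$ with $E_{\mu'}[N]=1$, but having mean $1$ does not prevent $N$ from being tiny on almost all of the ball, so a naive induction on $n$ does not close. The Carbery--Wright argument resolves this by carrying the induction with the $L^{1/d}$ quasi-norm $\int|P|^{1/d}d\mu$ rather than the $L^2$ norm --- this quantity behaves well under conditioning --- together with a reverse-H\"older inequality $\int|P|^{1/d}d\mu\gtrsim_d(\int P^2 d\mu)^{1/(2d)}$; the proofs of both facts use log-concavity of $\mu$ essentially (Pr\'ekopa--Leindler, and stability of log-concavity under taking marginals and conditionals), which is exactly why the theorem is stated for arbitrary log-concave measures and not just $\mathrm{Unif}(B_n)$. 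In this paper the natural course is to cite the result as a black box; if only a qualitative bound were needed, combining hypercontractivity of low-degree polynomials with the Paley--Zygmund inequality already yields $\Pr_x[\,|P(x)|<c_d\|P\|_2\,]\le 1-c_d'$.
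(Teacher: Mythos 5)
The paper does not prove this lemma; it states it as a black-box citation of Carbery--Wright \cite{CarberW01}, which is exactly what you conclude is ``the natural course'' at the end of your proposal. So your proposal and the paper are in agreement on the main point: this is an external classical result to be invoked, not re-derived.

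That said, your sketch of the Carbery--Wright argument is a faithful outline of the real proof, and you correctly put your finger on the essential difficulty --- after conditioning on $x'$, the conditional $L^2$ mass $N(x')$ has mean $1$ but can be tiny with large probability, so the induction does not close in $L^2$; the fix is to run the induction with $\int |P|^{1/d}\,d\mu$ and a reverse-H\"older inequality that exploits log-concavity. This is the actual structure of the Carbery--Wright paper, and identifying it as the crux is a sign of real understanding rather than a gap. One small imprecision to flag: the reduction $P \mapsto P - t$ normalizes against $\|P-t\|_2$, not $\|P\|_2$, and one cannot in general bound $\|P\|_2$ by $O(\|P-t\|_2)$ (e.g.\ $P = 1 + \delta x$, $t=1$, $\delta$ small). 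This wrinkle is already present in the paper's informal restatement of the lemma and is harmless for how the paper applies it (it only ever uses $t=0$ in \cref{lemma:polydistance}), but if you were to carry the shifted form through, you would want $\epsilon\|P-t\|_2$ on the right-hand side, which is what Carbery--Wright actually gives.
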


The following lemma says if we start from a polynomial with non-trivial norm, then for any ball that is centered not too far from the origin, the polynomial is unlikely to be very small. It will be important that this probability goes to zero as the radius of the ball increases. 

\begin{lemma}\label{lemma:polydistance}

Let $z$ be a uniformly random point in the unit ball of $\R^k$, $Q:\R^k \rightarrow \R$ be a degree $s$ polynomial such that $E[Q(z)^2] \geq 1$. Then, for any $y \in \R^k$, if $\|y \| \leq R$, 

$$Pr[ |Q(y + \Delta z)| > \epsilon ] < (ks) \epsilon^{1/s} \alpha^{1/s},$$
where $\alpha = \max((1+ \|y\|_\infty)/\Delta, ((1+\|y\|_\infty)/\Delta)^s)$. 
\end{lemma}

\begin{proof}[Proof Sketch for \cref{lemma:polydistance}]
    Let $P(z) = Q(y + \Delta z)$. We first argue that $E_z[P(z)^2]$ is non-trivially large. Without loss of generality, let us suppose $P$ has no constant term. 
Now, suppose that $E[P(z)]^2 = \delta$. Let $P(z) = \sum_S c_S h_S(z)$ where $h_S$ are the Legendre polynomials that form an orthonormal basis for polynomials under the uniform distribution on the sphere. Thus, $\sum_S c_S^2 = \delta^2$. Now, for $z'$ uniformly random on the unit ball, let $Q(z') = P((z'-y)/\Delta)$. Therefore, $$E[Q(z')^2] = E[ (\sum_S c_S h_S((z'-y)/\Delta))^2].$$ 
However, note that the Legendre polynomials of degree $s$ are bounded: in particular for any point $x$, and $h_S$ of degree $s$, we have $|h_S(x)| \leq s^{O(s)} (\|x\|_\infty + \|x\|_\infty^s)$. Therefore, if we let $\alpha = \max((1+ \|y\|_\infty)/\Delta, ((1+\|y\|_\infty)/\Delta)^s)$, we get 
$$E[Q(z')^2] \leq (\sum_S |c_S|) \cdot s^{O(s)} \alpha \leq k^{O(s)} s^{O(s)} \alpha \delta.$$ Therefore, we get $\delta > (ks)^{-O(s)}/\alpha$. Applying the previous lemma, we get 
$$Pr_z[ |P(z)| < \epsilon ] = O((\epsilon/\delta)^{1/s}).$$
Therefore, $Pr_z[ |Q(y+ \Delta z)| < \epsilon ] < (ks) \cdot \alpha^{1/s} \epsilon^{1/s}$. Finally, note that $\alpha = \max((1+ \|y\|_\infty)/\Delta, ((1+\|y\|_\infty)/\Delta)^s)$ is a decreasing function in $\Delta$. 
\end{proof}

\section{Cosine similarity in the architecture}
\label{appendix-new-sec:proofs-proposed-architecture}

In this section, we show that the cosine-similarity between points from a low-dimensional manifold is likely to be high. 
\subsection{Proof of \texorpdfstring{\cref{thm:attentiondistance}}{Theorem~\ref{thm:attentiondistance}}}

\begin{proposition}
    Given a $k$ dimensional manifold $X$ with constant distortion, for every $\varepsilon > 0$, $N \in \mathbb{Z}^+$, for every set of $\Omega( N(\log(k)/\varepsilon)^k)$ points in set $\{x \mid \|x\|_2 \leq 1 \}$, there must be a set $S$ of $N$ points such that for every $x, x' \in S$, $ \frac{\langle x, x' \rangle}{\| x \|_2 \|x' \|_2} \geq 1- \varepsilon$.     
\end{proposition}

\begin{proof}
First, we can consider a $k$-dimensional subspace, then the unit ball in the $k$-dimensional subspace can be covered by $O( (\log(k)/\varepsilon)^k)$ rays of angles at most $\varepsilon$. By pigeon-hole principle, we complete the proof.  For a general $k$-dimensional manifold with constant distortion, we can map it back to $k$-dimensional subspace which preserves distance up to a constant multiplicative factor.
\end{proof}

\section{Extracting structure across concepts}
\label{appendix-new-sec:proofs-extracting-structure-concepts}

We first show that the cosine similarity of the signatures of similar concepts will be high. We first prove this for $(d-1)$-dimensional well-behaved algebraic manifold.

\subsection{Proof for \texorpdfstring{\cref{lemma:similar-manifolds}}{Lemma~\ref{lemma:similar-manifolds}}}

\begin{lemma}
    Let $\mathcal D_1$ and $\mathcal D_2$ with random variables $X_1$ and $X_2$ be two distributions on $(d-1)$-dimensional well-behaved algebraic manifold $\mathcal M_1$ and $\mathcal M_2$ of degree $\ell$ such that for any point $x$ in the support of $\mathcal D_1$, $\langle w_1, \phi(x) \rangle = 0$ (similarly for any $x$ in support of $\mathcal D_2$, $\langle w_2, \phi(x) \rangle = 0$). Then, the inner product between their null space signature $\langle T(X_1), T(X_2) \rangle = \langle w_1, w_2 \rangle^2$. 
\end{lemma}

\begin{proof}
If a manifold $U$ is specified by a polynomial equation of degree $\ell$ and we use $\ell^{\text{th}}$ moments, there will be exactly one eigenvector in the null space specified by $c$. Thus $T(X_1) = w_1 w_1^\top$ and $T(X_2) = w_2 w_2^\top$. So $ \langle T(X_1), T(X_2) \rangle = \langle w_1 w_1^\top, w_2 w_2^\top \rangle = \langle w_1, w_2 \rangle^2$

\end{proof}

Now, we show that the cosine similarity between two similar low-dimensional linear manifolds is much high than two random manifolds.

\begin{lemma}
    Given two concept manifolds $\mathcal M_1$ and $\mathcal M_2$ with random variable $X_1$ and $X_2$, the intersection of the two concepts has null space given by the span of the  union of the null spaces of $X_1$ and $X_2$. Define $F(X) = I - T(X)$. Further if they are linear manifolds that are subspaces of dimension $k$ then the similarity between their signatures increases with the dimensionality $\text{dim}(X_1 \cap X_2)$ of their intersection.
\begin{enumerate}
    \item 
    $$ \langle F(X_1), F(X_2) \rangle \ge \text{dim}(X_1 \cap X_2) $$ 

    Note that  $ \langle T(X_1), T(X_2) \rangle =  \langle F(X_1), F(X_2) \rangle + d - \text{dim}(X_1) - \text{dim}(X_2).$ 

    \item In contrast two random subspaces $\mathcal M_1$ and $\mathcal M_2$ of dimension $k$ in $d$-dimensional space with random variables $X_1$ and $X_2$ have a small dot product if $k^2 << d$.
$$E[ \langle F(X_1), F(X_2) \rangle ] =  k^2/d$$

    \end{enumerate}
\end{lemma}

\begin{proof}
    let $v_1,..,v_{k_1}$ be a basis for the manifold $\mathcal M_1$ subspace and $w_1,..,w_{k_2}$ be a basis for $\mathcal M_2$ subspace. Then $F(X_1)= \sum v_i v_i^\top$ and $F(X_2)= \sum w_i w_i^\top$. Thus, $\langle F(X_1), F(X_2) \rangle =  \sum_{i,j} \langle v_i, w_j \rangle^2$. If $X_1$ and $X_2$ intersect in $\text{dim}(X_1 \cap X_2)$, then we can find basis that share $\text{dim}(X_1 \cap X_2)$ basis vectors. This gives   
$$ \langle F(X_1), F(X_2) \rangle \ge  \textrm{dim}(X_1 \cap X_2).$$

For random $v_i$ and $w_j$, $E [ \langle v_i , w_j \rangle ] = \frac{1}{d}$ which proves the result for random subspaces $X_1$ and $X_2$ (i.e., $E[ \langle F(X_1), F(X_2) \rangle ] =  k^2/d$). The result for $T(X)$ follows from the fact that $T(X) = I - F(X)$. 

\end{proof}

\begin{corollary}\label{thm:}[Similar Manifolds]

Consider random hyperplanes and spheres in $d$ dimensions where all coefficient's of hyperplanes are chosen as normal random variables and for spheres the  coordinates and radius chosen as unit normal random variables. By looking at the similarity $\langle T(U_1), T(U_2) \rangle$ we get

1. The expected similarity between two random lines is $1/d$

2. Between two parallel lines is $1 - O(1/d)$

3. Between two random spheres is $1/5$

4. Between two concentric spheres is $1$

5. Between a random line and a random sphere is $O(1/d)$
\end{corollary}

\subsection{Intersections of concepts}

Given two concepts with manifolds $U_1$ and $U_2$, the intersection of the two concepts is given by the intersection of the manifolds $U_1 \cap U_2$. Note that the null space of the $U_1 \cap U_2$ would be the span of the union of the null spaces of $U_1$ and $U_2$. Similarly one can define the union $U_1 \cup U_2$  as the concept whose null space is the intersection of the null spaces of $U_1$ and $U_2$; thus $U_1 \cup U_2$ is the concept with properties (specified in the form of polynomial equations) that is the intersection of the properties of $U_1$, $U_2$. 

\subsubsection{Proof of \texorpdfstring{\cref{thm:structuresigs}}{Theorem~\ref{thm:structuresigs}}}
 The main idea is that for two PSD matrices $A$ and $B$, \text{null-space}$(A \cap B) = \text{null-space}(A + B)$ -- this is because $x^\top (A+B) x = x^\top A x + x^\top B x$ is zero if and only if $x^\top A x = 0$ and $x^\top B x = 0$.

\subsection{Concepts that are union on a few simpler concepts}
\label{appendix-new-sec:proofs-atomic-concepts}

\subsubsection{Proof for \texorpdfstring{\cref{thm:dictionarysigs}}{Theorem~\ref{thm:dictionarysigs}}}
\label{appendix:subsec-atomic-signatures}

\begin{proof}
 This follows from the fact that the signature of the intersection of two concepts can be obtained from the signatures of the individual concepts ~\cref{thm:structuresigs}. By repeated taking intersections we get the set of atomic concepts.
\end{proof}

\subsubsection{Identifying points from the moment statistics}

The following remark demonstrates how a higher-level concept obtained by combining a few simpler concepts can be represented by the signature of the signatures of the underlying concepts; for example, a rectangle is obtained by a union of four line segments and the human stick figure would consist of six curves.

\begin{remark}
A stick figure human would consists of about 6 curves drawn on a 2d plane.  At the first level we would get signatures for each of those curves.  at the second level we would get a signature of the curve signatures.  This would become the signature of the entire stick figure. Even though the total degrees of the freedom here is large, one can easily identify the common invariant part of the human stick figure concept from just a few examples. 
\end{remark}

\begin{lemma}[Moment statistics memorize a small set of points]
For $k$ points, the signature obtained by the $O(k)$th moment statistic uniquely identifies exactly the set of $k$ points.
\end{lemma}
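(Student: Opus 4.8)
The plan is to show that for a set of $k$ distinct points $y_1, \ldots, y_k \in \mathbb{R}^d$, the kernel signature $M(Y)$ using the degree-$\Theta(k)$ monomial feature map $\phi$ has the property that its kernel (null space) is precisely the space of polynomials of degree $O(k)$ vanishing on all of $\{y_1,\ldots,y_k\}$, and that this vanishing ideal (truncated to the chosen degree) uniquely determines the point set. Concretely, I would first observe, exactly as in the proof of \cref{thm:nullspacepolyeqns}, that a polynomial $p$ (viewed as a coefficient vector) lies in the null space of $M(Y) = \frac{1}{k}\sum_i \phi(y_i)\phi(y_i)^\top$ if and only if $p(y_i) = \langle p, \phi(y_i)\rangle = 0$ for every $i$. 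So the null-space signature $T(Y)$ projects onto $\{p : \deg p \le D,\ p(y_i) = 0\ \forall i\}$ and its orthogonal complement, the column span of the $\phi(y_i)$, records the ``interpolation space'' of the $k$ points.

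The key step is a dimension/interpolation argument: for any $k$ distinct points in $\mathbb{R}^d$ there is a degree at most $k-1$ univariate-style separating construction — more carefully, one can find for each $i$ a polynomial of degree $O(k)$ that is $1$ at $y_i$ and $0$ at all other $y_j$ (e.g. take a generic linear functional $\ell$ separating the points and use Lagrange interpolation in $\ell$, giving degree $k-1$). This shows the vectors $\phi(y_1),\ldots,\phi(y_k)$ are linearly independent, so $M(Y)$ has rank exactly $k$ and the ``non-null'' part of the signature is a $k$-dimensional subspace. Conversely, knowing $T(Y)$ (equivalently the degree-$\le D$ polynomials vanishing on the set) pins down the common zero locus: the existence of the Lagrange-type separators forces that common zero locus within the relevant region to be exactly $\{y_1,\ldots,y_k\}$ and not a larger variety, because any additional common zero $z$ would have to satisfy all the separating polynomials, which is impossible. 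Hence the signature determines the set of points exactly.

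I would organize the write-up as: (1) reduce to understanding $\ker M(Y)$ via the inner-product identity; (2) construct the degree-$O(k)$ Lagrange-type interpolators using a generic separating linear form, concluding $\operatorname{rank} M(Y) = k$; (3) argue that the truncated vanishing ideal recovered from $T(Y)$ has common zero set exactly $\{y_1,\ldots,y_k\}$, hence the signature is a faithful fingerprint of the point set; (4) note the membership-check statement ($S(x).T(Y)=0 \iff x \in \{y_1,\ldots,y_k\}$) follows as in \cref{thm:nullspacepolyeqns}. Throughout, $O(k)$ is the degree needed for the separating polynomials, so taking $\phi$ to be the degree-$ck$ monomial map for a suitable constant $c$ (here $c=1$ essentially suffices since degree $k-1$ separators exist) is enough.

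The main obstacle I anticipate is the genericity/separation detail in step (2)–(3): one needs a single linear functional $\ell$ taking distinct values on all $k$ points so that Lagrange interpolation in one variable applies, and one must be careful that ``uniquely identifies'' is interpreted over the region where the non-degeneracy/boundedness assumptions hold rather than claiming the vanishing ideal is radical in full generality. Making precise that no spurious common zeros survive — i.e. that the degree-$O(k)$ truncation of the vanishing ideal already cuts out exactly the $k$ points and not an extra component at infinity or elsewhere — is the part that requires the most care, and it is handled exactly by exhibiting the explicit separators which leave no room for an extra point.
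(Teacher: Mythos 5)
Your overall plan is in the right spirit — characterize the null space of $M(Y)$ as the degree-$\le D$ polynomials vanishing on $\{y_1,\dots,y_k\}$ and argue its common zero set is exactly the $k$ points — but the paper goes a different and simpler route, and the part of your argument that is supposed to rule out spurious common zeros has a genuine gap.

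The paper's proof is a one-liner: exhibit the single polynomial $p(x) = \prod_{i=1}^k \|x - y_i\|^2$ of degree $2k$, whose zero set is exactly $\{y_1,\dots,y_k\}$. Since $p(y_i)=0$ for all $i$, $p$ lies in the null space of $M(Y)$; hence $S(x).T(Y)=0$ forces $p(x)=0$, i.e.\ $x\in\{y_1,\dots,y_k\}$, and conversely every $y_i$ passes the membership check. No genericity argument, no rank computation, no interpolation is needed.

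Your route has a flaw in step (3). The Lagrange separators $L_i$ (with $L_i(y_j)=\delta_{ij}$) are \emph{not} in the null space of $M(Y)$ — they take value $1$ at $y_i$ — so a common zero $z$ of the null space is under no obligation to satisfy $L_i(z)=0$, and the claim ``any additional common zero $z$ would have to satisfy all the separating polynomials, which is impossible'' does not follow. What \emph{is} in the null space is the product $q_\ell(x)=\prod_i(\ell(x)-\ell(y_i))$ for your chosen linear form $\ell$, but for a single fixed $\ell$ the zero set of $q_\ell$ is a union of $k$ parallel hyperplanes $\{\ell(x)=\ell(y_i)\}$, not $k$ points, so this alone does not pin down the point set. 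You would need to let $\ell$ range over (at least $d$ generic) linear forms and intersect, or — far cleaner — just use the paper's single polynomial $\prod_i\|x-y_i\|^2$, which cuts out exactly the $k$ points in one stroke and accounts for the ``degree $O(k)$'' (in fact $2k$) in the lemma statement. Your steps (1) and (2) (null-space characterization and rank $=k$ via the separators) are fine and are a useful supplement, but they are not what the paper uses and do not by themselves yield uniqueness of the zero locus.
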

\begin{proof}
This follows from the fact that there is a polynomial of degree $2k$ whose zero's coincide with the set of given $k$ points. This polynomial must be in the null space of the moments statistic matrix.
\end{proof}

\subsection{Examples of the signatures}
\label{appendix-new-sec:examples}


\begin{figure*}[t]
    \begin{center}
        \begin{tabular}{cc}
            \begin{subfigure}[t]{0.495\textwidth}
                \centering
                \includegraphics[height=3cm, scale=0.18]{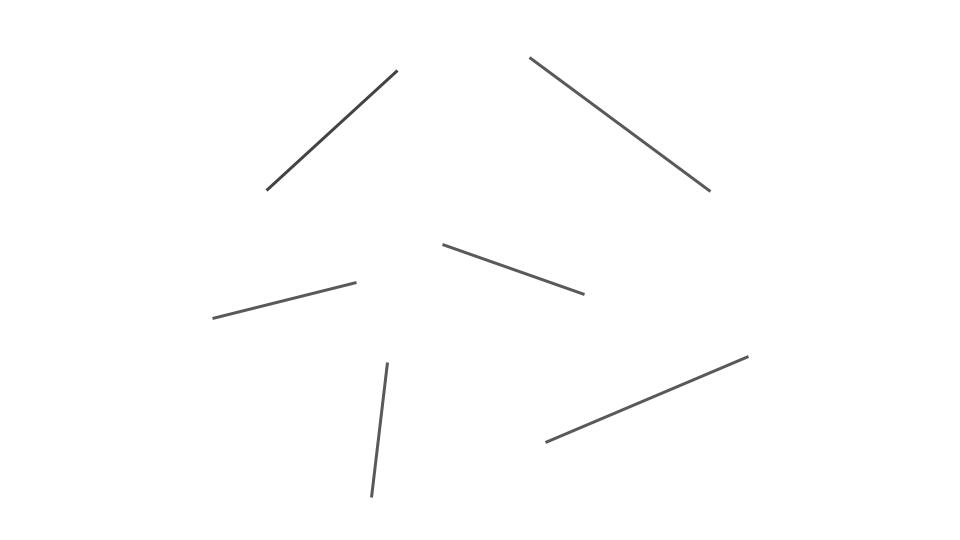}
                \caption{Several lines on a 2-d plane, each with a separate signature. \hspace{100cm} \\ \\ }
                \label{fig:line-signature}
            \end{subfigure} & 
            \begin{subfigure}[t]{0.495\textwidth}
            \centering
                \includegraphics[width=0.7\linewidth]{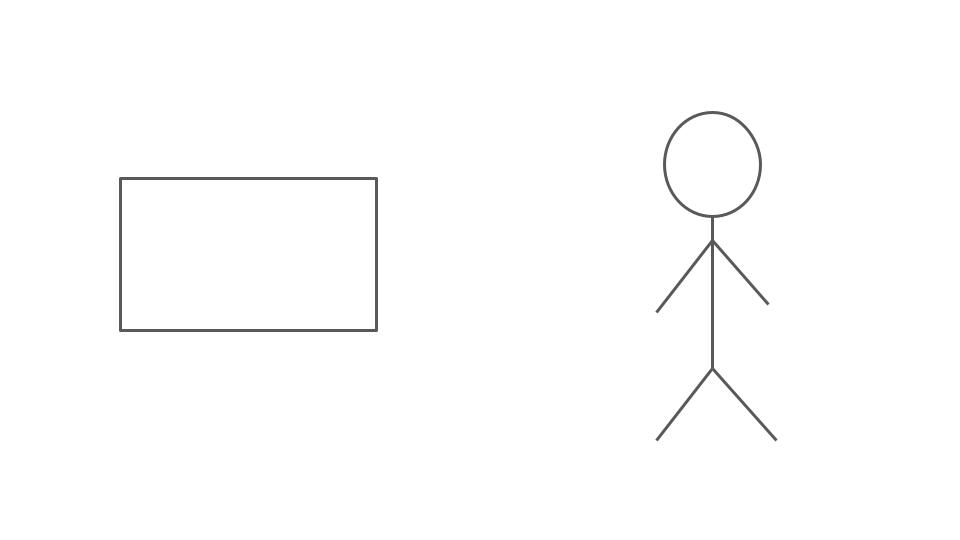}
                \caption{A rectangle with 4 segments whose signatures are related. The signatures of these four signatures form a signature of the full rectangle. Similarly for the person.}
                \label{fig:rectangle-signature}
            \end{subfigure}\\
            \begin{subfigure}[t]{0.495\textwidth}
            \centering
                \includegraphics[width=0.4\linewidth]{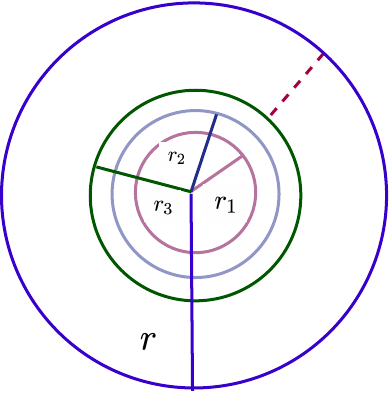}
                \caption{Several concentric circles of varying radius: the signatures of the individual circles all lie on a manifold that is the signature of the concept of a circle.}
                \label{fig:concentric-circles-signature}
            \end{subfigure} & 
            \begin{subfigure}[t]{0.495\textwidth}
            \centering
            \includegraphics[width=0.7\linewidth]{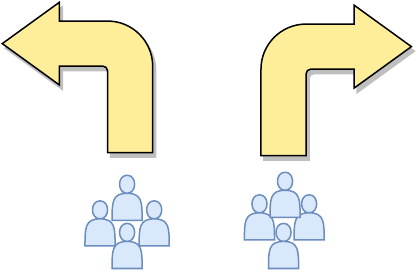}
                \caption{Two groups each traveling along a different trajectory. The trajectory of each object produces a signature; all objects in one group have related signatures that lie in a manifold.}
                \label{fig:similar-trajectory-signature}
            \end{subfigure}
        \end{tabular}
    \end{center}
    \caption{ 
    Examples and properties of the signatures.}
    \label{figure1}
\end{figure*}

As outlined in the introduction, the right notion of signatures and concepts should allow us to build higher-level concepts from lower-level concepts and allow us to identify common traits in objects by even simply intersecting concepts therefore, in this section, we provide several examples of effectiveness of the signature. 


\subsubsection{Signatures of circle and general concept of concentric circles} We start with learning the concept of a given circle by looking at the null space formed by the points on the circle. Next, by looking at the signatures of several circles and the null-space of the moments of these signatures we learn the general concept of a circle.

\begin{lemma}\label{thm:circlesig}
Consider the manifold $\mathcal M$ given by the unit circle in two dimensions. By using the feature mapping $\phi(x) = [1;x]^{\otimes 2}$, we obtain an invariant signature of a unit circle $T(X) =  ww^\top$ where $w = [-1,0,0,1,0,1]$. This signature can be obtained even from at least three random points from an arc of the circle.
\end{lemma}

\begin{proof}
Note that $\phi(x)$ includes all monomials of degree 2 (that is $\phi(x) =  [1,x_1,x_2,x_1^2,x_1 x_2,x_2^2]$). 
The circle signature is given by $T(X) = ww^\top$
where $w = [-1,0,0,1,0,1]$.
This corresponds to the equation of the circle $\langle T(X), S(x) \rangle = 0$ or $ \langle \phi(x), w \rangle = 0$ or $x_1^2 + x_2^2 - 1 = 0$. This is because points from the arc can only satisfy this one equation of degree $2$. Instead of $\phi(x) = [1;x]^{\otimes 2}$, if we had used a higher power $l$, then too any polynomial equation satisfied by an arc must be satisfied by the circle and must have $x_1^2 + x_2^2 - 1$ as a factor.
\end{proof}

Now, we will provide the result of signature of the general concept of concentric circles.

\begin{lemma}\label{thm:circleconceptsig} 
Individual signatures $T(X)$ of several concentric circles lie in a $1$-d manifold of constant degree; the signature of these signature corresponds to the the concept of a circle (centered at a origin).  
\end{lemma}
\begin{proof}
Note that $T(X)$ entries are polynomials in the radius $r$ of the circle. So there is a generative polynomial in $r$ to produce the signatures of the circles. Rest follows from the result for well behaved manifolds with generative polynomials.
\end{proof}

\subsubsection{Signature of translated and rotated image manifold}
\label{appendix:rotated-translated-signature}

In this section, we show how rotating an image produces the signatures on an analytic manifold which means we can use \cref{thm:analytic-implicit-degree} to obtain a signature of this manifold. Here, we explicitly show the Taylor series that transforms signatures under rotation $\theta$. 

\begin{lemma}
    Let $X$ denote the point cloud of the pixels of image (each pixel can be viewed as concatenation of position and color coordinates) and let $X_{\theta}$ denote the rotation of the image under rotation $\theta$. Then, the signature of $X_\theta$ can be obtained from the signature of $X$ and $\theta$. 
\end{lemma}
\begin{proof}
    Note that each point goes through following transformation:
    \begin{align*}
        x' &= x \cdot cos \theta + y \cdot \sin \theta  \quad \text{and} \quad
        y' = - x \cdot \sin \theta + y \cdot \cos \theta, 
    \end{align*}
    where point $x'$ is obtained by applying the rotation matrix corresponding to $\theta$ angle on point $x$. The second degree moment can be computed as 
    \begin{align*}
        M_{x'} &= \mathbb{E}[x'] = M_{x'} \cos \theta + M_{y'} \sin\theta \\
        M_{y'} &= \mathbb{E}[y'] = -M_x \sin \theta + M_y \cos \theta \\
        M_{x'^2} &= \mathbb{E}[x'^{2}] = M_{x^2} \cos^2 \theta + M_{y^2} \sin^2 \theta \\ 
        & \hspace{2cm} + 2 M_{x} M_y \cos \theta \sin \theta. \\
        M_{y'^2} &= \mathbb{E}[y'^{2}] = M_{x^2} \sin^2 \theta + M_{y^2} \cos^2 \theta \\ 
        & \hspace{2cm} - 2 M_x M_y \cos \theta \sin \theta \\
        M_{x'y'} &= \mathbb{E}[x' y'] = - M_{x^2} \sin \theta \cos \theta + M_{y^2} \sin \theta \cos \theta \\ 
        & \hspace{2cm} + M_x M_y ( \cos^2 \theta - \sin^2 \theta )
    \end{align*}
    For small rotation $\theta$, we can approximate $\sin \theta \approx \theta$ and $\cos \theta \approx 1 - \frac{\theta^2}{2}$ using taylor series with approximation error of $O(\theta^3)$. Using this approximation, we obtain: 
    \begin{align*}
        M_{x'} &= M_x \Big( 1 - \frac{\theta^2}{2} \Big) + M_y \theta \\
        M_{y'} &= - M_x \theta + M_y \Big( 1 - \frac{\theta^2}{2} \Big) \\
        M_{x'^2} &= M_{x^2} (1 - \theta^2) + M_{y^2} \theta^2 + 2 M_{xy} \theta \\
        M_{y'^2} &= M_{x^2} \theta^2 + M_{y^2} (1 - \theta^2 ) - 2 M_{xy} \theta \\
        M_{x'y'} &= - M_{x^2} \theta + M_{y^2} \theta + M_{xy} (1 - 2\theta^2). 
    \end{align*}
    This second order moment allows us to use the power transformation to get the signature of the rotated data manifold. Simplifying above equations, we obtain 
    \begin{align*}
         &\left( \begin{array}{c}
                M_{x'}\\
                M_{y'} \\
                M_{x'^2} \\
                M_{y'^2} \\
                M_{x'y'}
    	\end{array} \right) \approx 
      \left( \begin{array}{c}
                M_{x}\\
                M_{y} \\
                M_{x^2} \\
                M_{y^2} \\
                M_{xy}
    	\end{array} \right)  + \theta \left( \begin{array}{c}
                M_{y}\\
                -M_{x} \\
                M_{xy} \\
                -M_{xy} \\
                M_{y^2} - M_{x^2}
    	\end{array} \right)  + \frac{\theta^2}{2} \left( \begin{array}{c}
                -M_{x}\\
                M_{y} \\
                2M_{y^2} - 2M_{x^2} \\
                2M_{x^2} - 2M_{y^2} \\
                -4 M_{x y}
    	\end{array} \right)
    \end{align*}
    This provides a way to compute the signature $X_\theta$ from the signature of $X$ and $\theta$.
\end{proof}

Next, we describe the manifold obtained by translation of a given image.

\begin{lemma}[Signature of translated image manifold] Let $S(X_{(u, v)})$ be the signature of an image shifted by $(u, v)$. Then, all such signatures $S(X_{(u, v)})$ over different shifts lies on an analytic manifold.
    
\end{lemma}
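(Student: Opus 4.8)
# Proof Proposal for the Signature of Translated Image Manifold

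The plan is to mirror the argument used for the rotation case in \cref{thm:rotationsig}: exhibit the map $(u,v) \mapsto S(X_{(u,v)})$ as a polynomial — hence analytic — map from $\R^2$ into matrix space, so that its image is an analytic manifold with $(u,v)$ serving as the two generative parameters.

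First I would fix the translation action on pixels. A pixel is a vector $p_i = (x_i, y_i, c_i)$ consisting of its position $(x_i, y_i)$ and color channels $c_i$, and under the shift by $(u,v)$ it becomes $p_i(u,v) = (x_i + u,\, y_i + v,\, c_i)$, so every coordinate of the translated pixel is an affine function of the original coordinates in which $(u,v)$ enters affinely. Next I would observe that since $\phi$ is a monomial feature map of some fixed degree $\ell$, each entry of $\phi(p_i(u,v))$ is, by the binomial theorem, a polynomial in $(u,v)$ of degree at most $\ell$ whose coefficients depend on $p_i$; consequently each entry of
\[
S(X_{(u,v)}) \;=\; \frac{1}{N}\sum_{i} \phi\bigl(p_i(u,v)\bigr)\,\phi\bigl(p_i(u,v)\bigr)^{\top}
\]
is a polynomial in $(u,v)$ of degree at most $2\ell$. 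Exactly as in the rotation lemma, one can write out the low-order Taylor coefficients explicitly in terms of the moments $M_x, M_y, M_{x^2}, M_{xy}, \dots$ of the original pixel cloud, and I would include the first- and second-order terms for concreteness. Since a polynomial map is analytic and has bounded derivatives on any bounded region, the map $G : (u,v) \mapsto S(X_{(u,v)})$ (viewing the $m \times m$ output flattened into $\R^{m^2}$) is an analytic generative representation in the sense of the paper's definition of an analytic manifold, so the set $\{ S(X_{(u,v)}) : (u,v) \in R\}$ over a bounded region $R$ is a $2$-dimensional analytic manifold; by \cref{thm:analytic-implicit-degree} it is then, up to a small error, the common zero set of a finite family of polynomials, and one can take the signature of these signatures.

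The main point to be careful about is that the image of $G$ is genuinely a manifold and not a lower-dimensional or singular variety: this requires $dG$ to have rank $2$ on $R$, which can fail only when the image is invariant under a one-parameter subgroup of translations, i.e.\ for a degenerate image that is constant along some direction. Restricting to images and regions $R$ on which $dG$ has full rank — the generic situation — removes this issue; alternatively one simply adopts the paper's convention that an analytic generative parametrization on a bounded region already qualifies as an analytic manifold, which needs no rank hypothesis. A secondary (minor) point is that, just as in \cref{thm:rotationsig}, the statement concerns $S(X_{(u,v)})$ — a moment-type object, which varies polynomially and hence smoothly in $(u,v)$ — rather than the null-space projection $T(X_{(u,v)})$, so no discontinuity in the parametrization needs to be addressed.
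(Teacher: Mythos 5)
Your proposal is correct and takes essentially the same approach as the paper: the paper likewise argues that $S(X_{(u,v)})$ varies polynomially in $(u,v)$, doing so by explicitly computing the translated moments $M_{x'}, M_{y'}, M_{x'^2}, M_{y'^2}, M_{x'y'}$ as quadratic polynomials in $(u,v)$ and then giving the first-order Taylor expansion. Your version is a bit more abstract and general (binomial theorem for an arbitrary-degree $\phi$, plus a careful and valid caveat about the rank of $dG$ for degenerate images, which the paper omits), but the key observation is the same.
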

\begin{proof}
    Suppose each point goes through the following transformation: 
    \begin{align*}
        x' = x + u \quad \text{and} \quad y' = y + v \\
    \end{align*}
    The moments after the translation is given by
    \begin{align*}
        M_{x'} &= M_x + u \\
        M_{y'} &= M_y + v \\
        M_{x'^2} &= M_{x^2} + 2 M_x u + u^2 \\
        M_{y'^2} &= M_{y^2} + 2 M_y v + v^2 \\
        M_{x'y'} &= M_{xy} + M_x v + M_y u + u \cdot v
    \end{align*}
    Suppose the translation $u$ and $v$ are small such that the second order terms are neglible (e.g., $u^2, v^2$ and $u \cdot v$ are negligible), then we can obtain the second order moment statistics as a linear approximation with quadratic approximation error as follows:
    \begin{align*}
    \left( \begin{array}{c}
        M_{x'} \\
        M_{y'} \\
        M_{x'^2} \\
        M_{y'^2} \\
        M_{x'y'} 
    \end{array} \right) \approx
    \left( \begin{array}{c}
     M_x \\
    M_y \\
    M_{x^2} \\
    M_{y^2} \\
    M_{xy} 
    \end{array} \right) 
    + u \left( \begin{array}{c}
         1 \\
         0 \\
         2 \cdot M_x \\
         0 \\
         M_y
    \end{array} \right) 
    + v \left( \begin{array}{c}
         0 \\
         1 \\
         0 \\
         2 \cdot M_y \\
         M_x
    \end{array} \right)
    \end{align*}
\end{proof}

\subsubsection{Signature of an object moving in a specific motion pattern}
\label{appendix-subsec:motionsign}


\begin{lemma}\label{thm:motionsig}
The signatures of the motion of two objects moving with the same velocity function is sufficient to obtain the signature of that velocity function. Any other object with same velocity function will match that signature.
Given a set of points $X_1, X_2$ each moving along separate velocity functions $v_1(t), v_2(t)$, will result in a collection of concept signatures for each point's trajectory. All the concept signature of point trajectories from one set will intersect in the corresponding velocity concept signature. 
\end{lemma}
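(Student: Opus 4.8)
The plan is to view each point's trajectory as a curve parameterized by time $t$, and to show that (i) each such curve is an algebraic (or analytic) manifold in the ambient position-time space, so it has a well-defined signature by \cref{thm:nullspacepolyeqns} / \cref{thm:invarinantsigs}, and (ii) when two points share the same velocity function $v(\cdot)$, the parts of their signatures that encode $v$ coincide, and can be extracted by intersecting the two trajectory signatures using the intersection formula $F(U_1\cap U_2)=(F(U_1).F(U_2))^{\infty}$ from \cref{thm:structuresigs}. First I would set up coordinates: a point starting at position $p_0$ and moving with velocity $v(t)$ traces the curve $x(t) = p_0 + \int_0^t v(\tau)\,d\tau =: p_0 + V(t)$, and I would include $t$ itself as a coordinate so that the trajectory is a $1$-dimensional manifold $\mathcal{M}_{p_0,v}\subseteq \R^{d+1}$ in the variables $(x,t)$. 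If $v$ is polynomial (or analytic), then $V$ is polynomial (or analytic), so $\mathcal{M}_{p_0,v}$ is an algebraic (or analytic) manifold of bounded degree, and by \cref{thm:invarinantsigs} a few sample points along the trajectory suffice to recover its signature $T(\mathcal{M}_{p_0,v})$.

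Next I would isolate the ``velocity part'' of the signature. The defining equations of $\mathcal{M}_{p_0,v}$ are exactly $x - p_0 - V(t) = 0$ (one equation per coordinate), i.e. the $d$ relations $x^{(j)} - p_0^{(j)} - V^{(j)}(t) = 0$. The key observation is that differences of positions along the trajectory, and more to the point the relation $x(t_2) - x(t_1) = V(t_2) - V(t_1)$, depend only on $v$ and not on $p_0$. So I would argue that the family of polynomial relations among the coordinates that hold on every trajectory with velocity function $v$ — independent of the starting point $p_0$ — is precisely the ``common'' sub-ideal, and its signature is what survives the intersection. Concretely, given $X_1$ with velocity $v_1$ at start $p_0$ and $X_2$ with velocity $v_1$ at start $p_0'$, the manifolds $\mathcal{M}_{p_0,v_1}$ and $\mathcal{M}_{p_0',v_1}$ are parallel translates; their signature intersection via \cref{thm:structuresigs} picks out exactly the relations invariant under translation of the start point, which encode $v_1$ (equivalently $V_1'=v_1$) up to the choice of origin. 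I would then observe that any third object with the same velocity function $v_1$ has a trajectory manifold that satisfies these same common relations, hence its signature matches the extracted velocity signature in the sense of the membership check $S(\cdot).T = 0$ of \cref{thm:nullspacepolyeqns}.

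For the second half of the statement I would just apply the first half pointwise: $X_1$ moving under $v_1(t)$ gives one trajectory curve per point of $X_1$, all sharing velocity function $v_1$, so by the argument above all their signatures have the common velocity-$v_1$ signature in their intersection; likewise the signatures of the point-trajectories in $X_2$ intersect in the velocity-$v_2$ signature. Then \cref{thm:structuresigs} (iterated intersection, as in \cref{thm:dictionarysigs}) extracts these common components.

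The main obstacle I anticipate is making precise the claim that the intersection of the two trajectory signatures ``is'' the velocity signature: a single curve in $\R^{d+1}$ is $1$-dimensional and is cut out by $d$ equations, but two translated copies of it intersect in general in a lower-dimensional (often empty) set, so the naive set-theoretic intersection $\mathcal{M}_{p_0,v_1}\cap\mathcal{M}_{p_0',v_1}$ is the wrong object — what we actually want is the intersection at the level of the ideals/ null-spaces, i.e. the common polynomial relations, which is more like a ``join'' of the two parameterizations with the start points quotiented out. I would handle this by reformulating in terms of the differenced coordinates $y := x - x(0)$ (or by working with pairs of time-samples), so that in the $y$-coordinates both trajectories literally coincide as the manifold $\{y = V_1(t)\}$, and the claimed intersection becomes an honest equality of manifolds and hence of signatures; the bookkeeping to show this reformulation is faithfully computable from the raw signatures $S(X_i)$ via the operations of \cref{thm:structuresigs} is the delicate step.
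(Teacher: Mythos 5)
Your overall plan mirrors the paper's: trajectories are low-dimensional manifolds, and the velocity is recovered by intersecting signatures via \cref{thm:structuresigs}. You have also correctly located the one non-trivial step — two trajectory manifolds with the same velocity but different start points are parallel translates and do \emph{not} intersect set-theoretically, so naively applying $(F(U_1)F(U_2))^{\infty}$ would collapse to nothing. However, your proposed fix (differenced coordinates $y = x - x(0)$) is not the paper's and, as you yourself flag, is incomplete in a way that matters. Forming $y = x - x(0)$ requires singling out $x(0)$, which is not available from an unordered sample of the trajectory and is not an operation you have shown can be performed on the raw moment/null-space signatures. Moreover, for a non-constant velocity the pairwise-difference identity $x(t_2) - x(t_1) = V(t_2) - V(t_1)$ does not factor through the time lag $t_2 - t_1$, so ``both trajectories literally coincide in $y$-coordinates'' only holds for constant velocity; for general $v(t)$ the differenced manifolds do not coincide and the reduction fails.

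The paper resolves the parallel-manifold obstacle with a cleaner, standard device: homogenization. Append a constant $1$ coordinate (and work with the linear span rather than the affine set). A constant-velocity trajectory $\{p_0 + vt\}$ is a $1$-dimensional affine line in position space, but after appending $1$ it lifts to the $2$-dimensional linear subspace $\mathrm{span}\{(p_0,1),\,(v,0)\}$; a second object with the same velocity lifts to $\mathrm{span}\{(p_0',1),\,(v,0)\}$. These two $2$-d subspaces \emph{do} genuinely intersect, in $\mathrm{span}\{(v,0)\}$ — the velocity subspace — so the formula $F(U_1\cap U_2)=(F(U_1)F(U_2))^{\infty}$ from \cref{thm:structuresigs} applies directly, and the membership check for a third object is the projection identity $F(O)F(V)=F(V)$. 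Note that the paper does not make $t$ an ambient coordinate at all (time is only the latent generative parameter), so your explicit spacetime coordinatization is an extra complication; if you do keep $(x,t)$, you would still want homogenization (lifting the affine line to a linear subspace of $\R^{d+2}$) rather than differencing to make the parallel lifted subspaces meet.
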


\begin{proof}
Different objects moving with same velocity can be viewed as subspaces that intersect in the common velocity subspace. For e.g. an object moving in 3d space can be viewed as 1d manifold. And if the velocity is common these are parallel manifolds. By appending a $1$ coordinate these can be seen as 2d manifolds that intersect in the common velocity manifold. By ~\cref{thm:structuresigs} two such signatures are sufficient to obtain the signature $F(V)$ of the velocity manifold. Checking if another object motion $F(O)$ has the same velocity is also easy: check if $F(O) F(V) == F(V)$.

\end{proof}

\section{Connection to transformers}
\label{appendix-new-sec:proofs-connections-transformers}

We showed that keeping track of signature matrices for concepts allows an easy way to manipulate the concepts and also obtain signatures of higher-level concepts. In this section, we provide details about the connection of our learning architecture to the transformer model and proof of \cref{thm:transformer-connection}.

In particular, we show that a two-layer neural network with random weights can obtain $M(X)^2$ upto rotation, i.e., there exists a fixed projection of representation produced by the MLP layers that equals $M(X)^2$. One can use this two-layer neural network as a building block to obtain the higher powers of $M(X)$ and the null-space of a high enough powers of $M(X)$ defines the null-space signature of the concept.  


We denote the data matrix $\matX \in \mathbb{R}^{N \times d}$ where $i^{th}$ row is given by $x_i$.  We consider the square activation and the following two-layer neural network with $m$ hidden units whose weights are initialized using standard Gaussian $\mathcal{N}(0, I)$:
\begin{align*}
    G_{1, j}(x_i) &= \sigma(x_i \cdot r_j), \;\; \;\; \hat{G}_{1, j} = \frac{1}{N} \sum_{i=1}^N G_{1, j}(x_i) \;\; \text{and} \;\;\; G_{2, j} = \sigma( \hat{G}_{1, j}  )
\end{align*}
where $\sigma$ is an element-wise square activation. For wide enough MLP layers, we show that $\hat{G}_{1, j}$ is equivalent to $M(X)$ by a projection and $G_{2, j}$ is equivalent to $M(X)^2$ upto a projection.

\begin{theorem}
\label{thm:transformer-connection}
    There exists a projection of $\hat{G}_{1, j}$ that gives $M(X)$:
    \begin{align*}
        E_{r \sim \mathcal{N}(0, I)}[ r_j^{\otimes 2} \hat{G}_{1, j} ] - (d+1) I = M(X).
    \end{align*}
    Similarly, there exists a projection of the representation generated by first and second layer such that 
    \begin{align*}
        E_{r \sim \mathcal{N}(0, I)} [ r_j^{\otimes 2} \hat{G}_{1, j} + \beta_1 r_j^{\otimes 2} G_{2, j} ] + \beta_2 I = M(X)^2
    \end{align*}
    where $\beta_1$ and $\beta_2$ only depend on $d$.
\end{theorem}




A feedforward network can be thought of transforming the input $x$ to obtain the feature transform $\phi(x)$. In \cref{thm:attentiondistance}, we showed that the cosine similarity of points from the same manifold is high. A similar argument also holds for the feature transform $\phi(x)$ therefore cosine similarity-based attention mechanism will group points from the same concept together and therefore will obtain average across points from the same manifold $\hat{G}_{1, j}$. In \cref{thm:transformer-connection}, we show that the $\hat{G}_{1, j}(X)$ contains the flattened version of the kernel signature $M(X)$. Recall that the null-space signature $T(X)$ can be obtained using repeated multiplication of $(I-M(X))$ therefore, in \cref{thm:transformer-connection}, we prove that the subsequent feedforward network contains $M(X)^2$ matrix. Combining the output of attention and feedforward network, we can obtain $(I - M(X))^2$. Repeating such operations will help in obtaining higher powers of $(I - M(X))$ and hence, the null-space signature $T(X)$. 

\begin{figure*}
    \centering
    \includegraphics[scale=0.6]{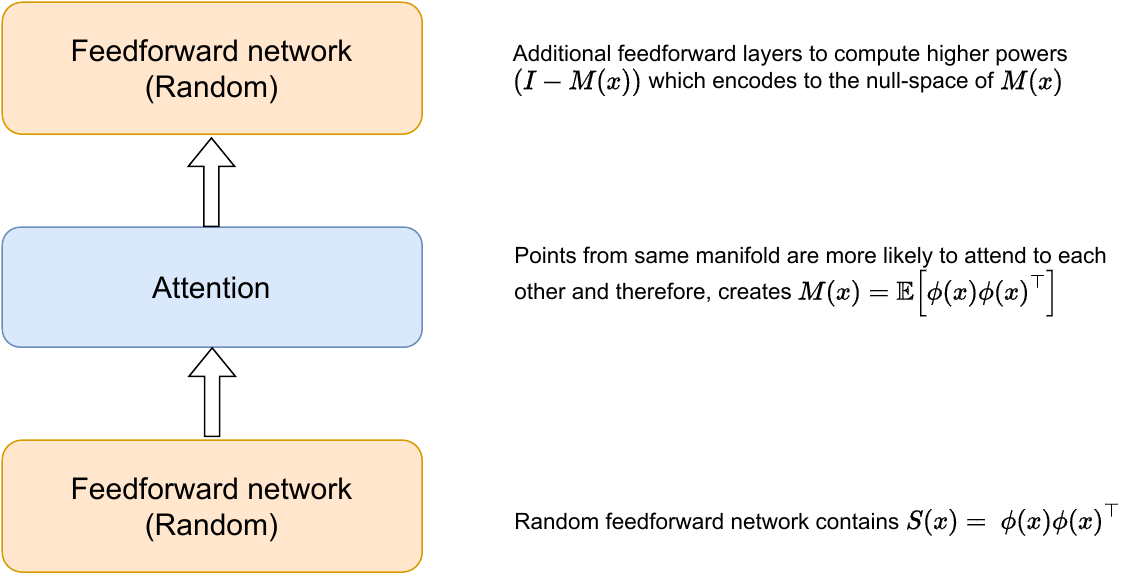}
    \caption{Connection of our learning architecture to the original transformers. A combination of feedforward network and attention module will generate $M(X)$ and a subsequent feedforward layer can generate the nullspace signature $T(X)$ by taking higher power of $I-M(X)$. See \cref{thm:transformer-connection} for more details.}
    \label{fig:transformer-connection}
\end{figure*}

Next, we provide the proof of \cref{thm:transformer-connection}. 

\begin{proof}
    Without loss of generality, we assume that $E[X_i^2] = 1$. By definition, $\hat{G}_{1, j} = \frac{1}{N} \sum_{i=1}^N \sigma( x_i \cdot r_j ) = \frac{1}{N} \sum_{i=1}^N  ( r_j^\top x_i x_i^\top r_j ) = r_j^\top M(X) r_j$.
    The expectation of $\hat{G}_{1, j}$ with respect to random Gaussian weights $r_j$ can be simplified using Stein's lemma. Therefore, we have
    \begin{align*}
        & \mathbb{E}_{r}[ r^{\otimes 2} ( r^\top M(X) r ) ] \\ 
        &= \mathbb{E}_r[ \nabla^{(2)} ( r^\top M(X) r ) + (r^\top M(X) r ) I ] \\ 
        &= M(X) + (d+1)I.
    \end{align*}
    Similarly, simplifying $r_j^{\otimes 2} \hat{G}_{2, j}$ with respect to random Gaussian weights $r_j \sim \mathcal{N}(0, I)$, we have
    \begin{align*}
        & \mathbb{E}[ r^{\otimes 2} \hat{G}_{2, j} ] \\
        &= \mathbb{E}[ \nabla^{(2)} (r^\top M(X) r)^2 + (r^\top M(X) r )^2 ] \\ 
        &=  \mathbb{E}[ 2 (r^\top M(X) r) ( M(X) + I ) + (r^\top M(X) r)^2 I \\ 
        & \hspace{1cm} + (M(X) + I) r r^\top (M(X) + I) ]. \\
        &= 2d (M(X) + I) + M(X)^2 + 2 M(X) + I \\ 
        & \hspace{1cm} + 3\sum_{i=1}^d M_{i,i}(X)^2 I + \sum_{i \neq j} M_{i, i}(X) M_{j, j} (X) I \\
        &= M(X)^2 + (2d + 2) M(X)  + (d^2 + 4d + 1)I \\
    \end{align*}
    Setting $\beta_1 = -(2d + 2)$ and $ \beta_2 = d^2 + 1$, we obtain the result. 
\end{proof}

\end{document}